\newtheorem{lemma}{Lemma}
\newtheorem{proposition}{Proposition}
\newtheorem{corollary}{Corollary}
\def\b{\ensuremath\boldsymbol}
\icmltitlerunning{Laplacian-Based Dimensionality Reduction: Tutorial and Survey}
\begin{document}

\AddToShipoutPictureBG*{%
  \AtPageUpperLeft{%
    \setlength\unitlength{1in}%
    \hspace*{\dimexpr0.5\paperwidth\relax}
    \makebox(0,-0.75)[c]{\normalsize {\color{black} To appear as a part of an upcoming textbook on dimensionality reduction and manifold learning.}}
    }}

\twocolumn[
\icmltitle{Laplacian-Based Dimensionality Reduction Including Spectral Clustering, Laplacian Eigenmap, Locality Preserving Projection, Graph Embedding, and Diffusion Map: Tutorial and Survey}

\icmlauthor{Benyamin Ghojogh}{bghojogh@uwaterloo.ca}
\icmladdress{Department of Electrical and Computer Engineering, 
\\Machine Learning Laboratory, University of Waterloo, Waterloo, ON, Canada}
\icmlauthor{Ali Ghodsi}{ali.ghodsi@uwaterloo.ca}
\icmladdress{Department of Statistics and Actuarial Science \& David R. Cheriton School of Computer Science, 
\\Data Analytics Laboratory, University of Waterloo, Waterloo, ON, Canada}
\icmlauthor{Fakhri Karray}{karray@uwaterloo.ca}
\icmladdress{Department of Electrical and Computer Engineering, 
\\Centre for Pattern Analysis and Machine Intelligence, University of Waterloo, Waterloo, ON, Canada}
\icmlauthor{Mark Crowley}{mcrowley@uwaterloo.ca}
\icmladdress{Department of Electrical and Computer Engineering, 
\\Machine Learning Laboratory, University of Waterloo, Waterloo, ON, Canada}

\icmlkeywords{Tutorial}

\vskip 0.3in
]

\begin{abstract}
This is a tutorial and survey paper for nonlinear dimensionality and feature extraction methods which are based on the Laplacian of graph of data. We first introduce adjacency matrix, definition of Laplacian matrix, and the interpretation of Laplacian. Then, we cover the cuts of graph and spectral clustering which applies clustering in a subspace of data. Different optimization variants of Laplacian eigenmap and its out-of-sample extension are explained. Thereafter, we introduce the locality preserving projection and its kernel variant as linear special cases of Laplacian eigenmap. Versions of graph embedding are then explained which are generalized versions of Laplacian eigenmap and locality preserving projection. Finally, diffusion map is introduced which is a method based on Laplacian of data and random walks on the data graph. 
\end{abstract}

\section{Introduction}

Nonlinear dimensionality reduction methods find the manifold of data in a nonlinear way for the sake of feature extraction or data visualization \cite{ghojogh2019feature}. 
Dimensionality reduction methods belong to three broad categories, which are spectral, probabilistic, and neural network-based methods \cite{ghojogh2021data}. 
The spectral methods deal with the graph and geometry of data and usually reduce to an eigenvalue or generalized eigenvalue problem \cite{ghojogh2019eigenvalue}. 
A sub-family of the spectral dimensionality reduction methods is based on the Laplacian of graph of data \cite{merris1994laplacian}. The importance of Laplacian was probably noticed more when researchers gradually proposed spectral clustering methods which made use of Laplacian \cite{weiss1999segmentation,ng2001spectral}. Spectral clustering transforms data onto a low-dimensional subspace to hopefully separate the clusters better. Then, in that discriminating subspace, a simple clustering algorithm is applied. Researchers adopted the idea of spectral clustering to use it for the goal of dimensionality reduction. The result was the Laplacian eigenmap \cite{belkin2001laplacian,belkin2003laplacian}. 
In the meantime, out-of-sample extension techniques were also proposed for spectral clustering and Laplacian eigenmap \cite{bengio2003out,gisbrecht2012out,bunte2012general,gisbrecht2015parametric}.
Later on, people decided to linearize the nonlinear Laplacian eigenmap using linear projection. Hence, Locality Preserving Projection (LPP) was proposed \cite{he2004locality}. A kernel version of LPP was also proposed \cite{cheng2005supervised,li2008kernel} to apply LPP in the feature space rather than input space. 
Afterwards, researchers found out that many spectral methods belong to a family of methods so they tried to propose a generalized spectral method using the Laplacian matrix. Therefore, graph embedding, with its various variants, was proposed \cite{yan2005graph,yan2006graph}. 
It was around that time that another method, which made use of Laplacian and kernel of data, was proposed which revealed the underlying nonlinear manifold of data in a diffusion process. This method is known as the diffusion map \cite{Lafon2004diffusion,coifman2005geometric,coifman2006diffusion}. 
In this paper, we explain the theory and intuitions of these Laplacian-based methods in detail. 
Note that although Locally Linear Embedding (LLE) can also be considered as a Laplacian-based method, we have provided a separate tutorial for it and its variants \cite{ghojogh2020locally}.
Note that there exist some surveys, such as \cite{wiskott2019laplacian}, on the Laplacian-based methods.

The remainder of this paper is as follows. We introduce the adjacency matrix, Laplacian matrix, and their interpretations in Section \ref{section_interpretation_Laplacian}. Sections \ref{section_spectral_clustering} introduces the cuts and clustering for two and multiple number of classes. Laplacian eigenmap and its out-of-sample extension are explained in Section \ref{section_Laplacian_eigenmap}. We introduce LPP and kernel LPP in Section \ref{section_Locality_Preserving_Projection}. Direct, linearized, and kernelized graph embedding are also detailed in Section \ref{section_graph_embedding}. We introduce the diffusion map in Section \ref{section_diffusion_map}. Finally, Section \ref{section_conclusion} concludes the paper. 

\section*{Required Background for the Reader}

This paper assumes that the reader has general knowledge of calculus, linear algebra, and basics of optimization. 

\section{Laplacian Matrix and its Interpretation}\label{section_interpretation_Laplacian}

Most of the methods in this tutorial use the Laplacian matrix of data graph. In this section, we introduce the Laplacian matrix and its interpretation. 

\subsection{Adjacency Matrix}

Assume we have a dataset $\{\b{x}_i\}_{i=1}^n$.
In spectral clustering and related algorithms, we first construct a nearest neighbors graph. There are two versions of constructing the nearest neighbors graph \cite{belkin2001laplacian}:
\begin{enumerate}
\item graph using $\epsilon$-neighborhoods: nodes $i$ and $j$ are connected if $\|\b{x}_i - \b{x}_j\|_2^2 \leq \epsilon$ 
\item $k$-Nearest Neighbors ($k$NN) graph: node $j$ is connected to node $i$ are connected if $\b{x}_j$ is among the $k$ nearest neighbors of $\b{x}_i$
\end{enumerate}
Using the nearest neighbors graph, we construct an adjacency matrix, also called weight matrix or affinity matrix, which is a weighted version of nearest neighbors graph. The weights of adjacency matrix represent the similarity of nodes. The more similar two nodes are, the larger their weight is. Let $\b{W} \in \mathbb{R}^{n \times n}$ denote the adjacency or weight matrix whose $(i,j)$-th element is \cite{ng2001spectral}:
\begin{align}\label{equation_adjacency_matrix}
\b{W}(i,j) := 
\left\{
    \begin{array}{ll}
        w_{ij} & \mbox{if } \b{x}_j \in \text{nearest neighbors of } \b{x}_i, \\
        w_{ij} = 0 & \mbox{if } i = j, \\
        w_{ij} = 0 & \mbox{if } \b{x}_j \not\in \text{nearest neighbors of } \b{x}_i.
    \end{array}
\right.
\end{align}
Note that one can choose a fully connected nearest neighbors graph where all points are among the neighbors of each other. In that case, the third above condition can be omitted \cite{ng2001spectral}.

An example of the weight function for the edge between $\b{x}_i$ and $\b{x}_j$ is the Radial Basis Function (RBF) or heat kernel \cite{belkin2001laplacian}:
\begin{align}\label{equation_RBF_kernel}
w_{ij} = \exp\Big(\!\!-\frac{\|\b{x}_i - \b{x}_j\|_2^2}{2 \sigma^2}\Big),
\end{align}
whose parameter $\sigma^2$ can be set to $0.5$ or one for simplicity. This weight function gets larger, or closer to one, when $\b{x}_i$ and $\b{x}_j$ are closer to each other. 

Another approach for choosing $w_{ij}$ is the simple-minded approach \cite{belkin2001laplacian}:
\begin{align}\label{equation_simple_minded}
w_{ij} := 
\left\{
    \begin{array}{ll}
        1 & \mbox{if } i \text{ and } j \text{ are connected} \\
        & \quad \quad \text{in the nearest neighbors graph}, \\
        0 & \mbox{otherwise. } 
    \end{array}
\right.
\end{align}

\subsection{Laplacian Matrix}\label{section_Laplacian_matrix_definition}

We define the Laplacian matrix of the adjacency matrix, denoted by $\b{L} \in \mathbb{R}^{n \times n}$, as \cite{merris1994laplacian}:
\begin{align}\label{equation_Laplacian_matrix}
\b{L} := \b{D} - \b{W},
\end{align}
where $\mathbb{R}^{n \times n} \ni \b{D} = \text{diag}([d_1, \dots, d_n]^\top)$ is the degree matrix with diagonal elements as:
\begin{align}\label{equation_degree_matrix}
d_i = \sum_{j=1}^n w_{ij},
\end{align}
which is the summation of its corresponding row. 
Therefore, the row summation of Laplacian matrix is zero:
\begin{align}\label{equation_Laplacian_row_sum}
\sum_{j=1}^n \b{L}_{ij} = \b{0}.
\end{align}

It is noteworthy that there exist some other variants of Laplacian matrix such as \cite{weiss1999segmentation,ng2001spectral}:
\begin{align}\label{equation_Laplacian_matrix_2}
\b{L} := \b{D}^{-\alpha} \b{W} \b{D}^{-\alpha},
\end{align}
where $\alpha \geq 0$ is a parameter. A common example is $\alpha=0.5$. 
Some papers have used \cite{weiss1999segmentation,ng2001spectral}:
\begin{align}\label{equation_Laplacian_matrix_D_W_D}
\b{L} := \b{D}^{-1/2} \b{W} \b{D}^{-1/2},
\end{align}
rather than Eq. (\ref{equation_Laplacian_matrix}) for the definition of Laplacian.  
According to \cite{weiss1999segmentation}, Eq. (\ref{equation_Laplacian_matrix_D_W_D}) leads to having the $(i,j)$-th element of Laplacian as:
\begin{align}\label{equation_Laplacian_matrix_W_over_D_D}
\b{L}(i,j) := \frac{\b{W}(i,j)}{\sqrt{\b{D}(i,i)\, \b{D}(j,j)}}.
\end{align}
Note that Eq. (\ref{equation_Laplacian_matrix_W_over_D_D}) is very similar to one of the techniques for kernel normalization \cite{ah2010normalized}. Also, we know that adjacency matrix is a kernel, such as RBF kernel in Eq. (\ref{equation_RBF_kernel}). Hence, in the literature, sometimes Laplacian is referred to as the \textit{normalized kernel} matrix (e.g., see the literature of diffusion map \cite{coifman2006diffusion,nadler2006diffusion2}). 

Note that Eq. (\ref{equation_Laplacian_matrix_2}) has an opposite effect compared to Eq. (\ref{equation_Laplacian_matrix}) because the matrix $\b{W}$ is not negated in it. 
According to the spectral graph theory \cite{chung1997spectral}, we can use $(\b{I} - \b{L})$ or Eqs. (\ref{equation_Laplacian_matrix_D_W_D}) and (\ref{equation_Laplacian_matrix_W_over_D_D}) rather than $\b{L}$ in optimization problems if we change minimization to maximization.

It is also good to note that Eq. (\ref{equation_Laplacian_matrix_2}) is very similar to the \textit{symmetric normalized Laplacian matrix} defined as {\citep[p. 2]{chung1997spectral}}:
\begin{align}
\b{L} \gets \b{D}^{-(1/2)} \b{L} \b{D}^{-(1/2)}.
\end{align}

\subsection{Interpretation of Laplacian}


Let $\b{B}$ denote the incidence matrix of the data graph where the rows and columns of $\b{B}$ correspond to edges and vertices of the graph, respectively. 
The elements of this incidence matrix determine if a vertex is connected to an edge (it is zero if they are not connected). Therefore, this matrix shows if every two points (vertices) are connected or not. 

\begin{lemma}[{\citep[Section 5.6]{strang2014differential}}]\label{lemma_incidence_matrix}
The incidence matrix of a directed graph can be seen as a difference matrix (of graph nodes). 
\end{lemma}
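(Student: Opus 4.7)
The plan is to unpack the definition of the incidence matrix $\b{B}$ in the directed setting and then exhibit its action on an arbitrary vector $\b{f} \in \mathbb{R}^n$ indexed by the vertices, showing that each coordinate of $\b{B}\b{f}$ is a signed difference of entries of $\b{f}$ along an edge. This directly identifies $\b{B}$ with a discrete difference operator on the graph.

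First, I would fix the convention: for a directed edge $e$ from vertex $i$ to vertex $j$, the corresponding row of $\b{B}$ has entry $-1$ in column $i$, entry $+1$ in column $j$, and $0$ in every other column (the sign choice is immaterial up to orientation). Thus each row of $\b{B}$ has exactly one $-1$, one $+1$, and is otherwise zero.

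Next, I would compute the matrix-vector product. For any $\b{f} = [f_1, \dots, f_n]^\top$ interpreted as a scalar function on the vertices, the $e$-th coordinate of $\b{B}\b{f}$ is
\begin{align*}
(\b{B}\b{f})_e = \sum_{k=1}^n \b{B}(e,k)\, f_k = f_j - f_i,
\end{align*}
which is precisely the (oriented) difference of $\b{f}$ across the edge $e = (i,j)$. Hence $\b{B}$ maps vertex-valued signals to edge-valued differences, which is exactly what justifies calling it a difference matrix; one can further note the analogy with the one-dimensional forward-difference operator $f(x+h) - f(x)$ along a path graph.

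The main step is really just unrolling definitions, so there is no serious obstacle; the only subtle point is emphasizing that the statement requires a \emph{directed} graph (or at least a fixed orientation on each edge), since without a sign convention the entries of $\b{B}$ would both be $+1$ and the product would be a sum rather than a difference. I would conclude the proof by remarking that this viewpoint is what makes $\b{B}^\top \b{B}$ coincide (up to the choice of weights) with the graph Laplacian $\b{L}$ defined in Eq. (\ref{equation_Laplacian_matrix}), foreshadowing the interpretation developed in the rest of the section.
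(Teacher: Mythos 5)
Your proof is correct and is exactly the standard definitional argument: with one $+1$ and one $-1$ per row, $(\b{B}\b{f})_e = f_j - f_i$ is the oriented difference across edge $e$, which is what the paper's cited source (Strang, Section 5.6) also does; the paper itself offers no proof and simply defers to that reference. Your closing remarks about the necessity of a directed orientation and the connection to $\b{B}^\top \b{B} = \b{L}$ are apt and consistent with the surrounding discussion in the paper.
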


\begin{corollary}
As gradient, denoted by $\nabla$, is a measure of small difference, we can say from Lemma \ref{lemma_incidence_matrix} that the incidence matrix is analogous to gradient:
\begin{align}\label{equation_incidence_and_gradient}
\b{B} \propto \nabla.
\end{align}
\end{corollary}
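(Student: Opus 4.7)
The plan is to chain together two observations and argue that the proportionality is really just the content of Lemma~\ref{lemma_incidence_matrix} re-expressed in calculus language. First, I would unpack Lemma~\ref{lemma_incidence_matrix}: if we index rows of $\b{B}$ by directed edges $e=(u,v)$ and columns by vertices, then for any node-valued signal $\b{f} \in \mathbb{R}^n$ the product $\b{B}\b{f}$ produces, in its $e$-th entry, the difference $\b{f}(v) - \b{f}(u)$. Thus $\b{B}$ is literally the operator that assigns to a function on vertices its finite difference along each edge of the graph.

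Second, I would recall the classical definition of the gradient in analysis, namely that $(\nabla f)(\b{x}) = \lim_{h\to 0} (f(\b{x}+h) - f(\b{x}))/h$ (interpreted component-wise), so that $\nabla$ is nothing other than the continuum limit of a normalized difference operator between a point and an infinitesimally displaced neighbour. Identifying vertices of the graph with sample points in a domain and edges with the displacements between neighbouring samples, the map $\b{f} \mapsto \b{B}\b{f}$ acts exactly as a discretized gradient: it records, for each pair of connected sample points, how much $\b{f}$ changes from one to the next. Up to the edge-length normalization that would be required to convert a finite difference into a derivative, this is the same operation as $\nabla$, which is precisely the content of the proportionality $\b{B} \propto \nabla$ in~(\ref{equation_incidence_and_gradient}).

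The hard part, such as it is, is really just recognizing that the statement is a heuristic identification rather than a strict equality; once Lemma~\ref{lemma_incidence_matrix} is in hand there is nothing to compute, only an analogy to articulate. I would therefore keep the written proof to a single short paragraph: invoke Lemma~\ref{lemma_incidence_matrix} to conclude that $\b{B}$ is a discrete difference operator on node values, then observe that the gradient is by definition a (normalized, infinitesimal) difference, and conclude $\b{B} \propto \nabla$ with the symbol $\propto$ carrying the burden of the missing edge-length scaling.
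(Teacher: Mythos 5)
Your proposal is correct and matches the paper's reasoning: the corollary is stated in the paper without a separate proof, its justification being exactly the chain you describe --- Lemma~\ref{lemma_incidence_matrix} identifies $\b{B}$ as a difference operator on node values, the gradient is by definition an (infinitesimal, normalized) difference, and the symbol $\propto$ absorbs the missing scaling. Your elaboration of the edge-indexed action $\b{f} \mapsto \b{B}\b{f}$ is a slightly more explicit articulation of the same heuristic identification, not a different route.
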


\begin{lemma}[{\citep[Lemma 10]{kelner2007an}}]
We can factorize the Laplacian matrix of a graph using its incidence matrix:
\begin{align}\label{equation_Laplacian_factorize_incidence}
\b{L} = \b{B}^\top \b{B}.
\end{align}
This also explains why the Laplacian matrix is positive semi-definite. 
\end{lemma}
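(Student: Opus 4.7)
The plan is to establish the factorization by direct computation of the entries of $\b{B}^\top \b{B}$, and then read off positive semi-definiteness as an immediate corollary. First I would fix an (arbitrary) orientation of the edges of the data graph, so that each edge $e$ becomes a directed pair $(i,j)$, and I would use the \emph{weighted} incidence matrix: row $e=(i,j)$ of $\b{B}$ has entry $+\sqrt{w_{ij}}$ in column $i$, entry $-\sqrt{w_{ij}}$ in column $j$, and zeros elsewhere. This is the natural weighted refinement of Lemma \ref{lemma_incidence_matrix}: applied to a vector $\b{f} \in \mathbb{R}^n$ of vertex values, $\b{B}\b{f}$ returns the scaled directed differences $\sqrt{w_{ij}}(f_i - f_j)$ across edges, in analogy with Eq.~(\ref{equation_incidence_and_gradient}).

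Next I would compute $(\b{B}^\top \b{B})_{kl} = \sum_{e} B_{ek} B_{el}$, splitting into the diagonal and off-diagonal cases. For $k=l$, each edge incident to vertex $k$ contributes $(\pm\sqrt{w_{kj}})^2 = w_{kj}$, so the sum equals $\sum_{j} w_{kj} = d_k$ by Eq.~(\ref{equation_degree_matrix}), recovering the diagonal of the degree matrix $\b{D}$. For $k \neq l$, only the edge between $k$ and $l$ (if present) can produce a nonzero term, and that term is $(\sqrt{w_{kl}})(-\sqrt{w_{kl}}) = -w_{kl}$; if no such edge exists, the weight is already zero. Either way, the off-diagonal $(k,l)$ entry equals $-\b{W}(k,l)$. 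Combining the two cases yields $\b{B}^\top \b{B} = \b{D} - \b{W} = \b{L}$ by the definition in Eq.~(\ref{equation_Laplacian_matrix}).

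For positive semi-definiteness, the factorization does all the work: for every $\b{x} \in \mathbb{R}^n$,
\begin{align*}
\b{x}^\top \b{L}\, \b{x} = \b{x}^\top \b{B}^\top \b{B}\, \b{x} = \|\b{B}\b{x}\|_2^2 \geq 0,
\end{align*}
so $\b{L} \succeq \b{0}$. (As a sanity check, $\b{B}\b{1} = \b{0}$ recovers the row-sum property in Eq.~(\ref{equation_Laplacian_row_sum}).)

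The main obstacle is conceptual rather than technical: one must choose the correct scaling on the incidence matrix. A naive $\pm 1$-incidence matrix would only reproduce the \emph{unweighted} Laplacian, and mismatches by a factor of $\sqrt{w_{ij}}$ would break the identification of the off-diagonal entries with $-\b{W}$. The other small care point is that the computation implicitly assumes the graph is simple (at most one edge between any two vertices), which is the standard setting here; for multigraphs the off-diagonal sum collapses only after summing over parallel edges. Neither issue is deep, and once the weighted incidence matrix is set up correctly, the proof reduces to the two-line entrywise calculation above.
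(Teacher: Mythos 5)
Your proof is correct. The paper itself gives no proof of this lemma --- it simply cites \citep[Lemma 10]{kelner2007an} --- so there is nothing internal to compare against, but your argument is exactly the standard one underlying that reference: fix an orientation, use the weighted incidence matrix with entries $\pm\sqrt{w_{ij}}$, verify entrywise that $\b{B}^\top\b{B}$ has diagonal $d_k$ and off-diagonal $-w_{kl}$, hence equals $\b{D}-\b{W}=\b{L}$, and read off $\b{x}^\top\b{L}\b{x}=\|\b{B}\b{x}\|_2^2\geq 0$. Your two flagged care points (the $\sqrt{w_{ij}}$ scaling, which does require $w_{ij}\geq 0$ --- satisfied by the RBF and simple-minded weights in Eqs.~(\ref{equation_RBF_kernel}) and (\ref{equation_simple_minded}) --- and the simple-graph assumption) are the right ones, and the sanity check $\b{B}\b{1}=\b{0}$ correctly recovers Eq.~(\ref{equation_Laplacian_row_sum}).
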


\begin{corollary}
From Eqs. (\ref{equation_incidence_and_gradient}) and (\ref{equation_Laplacian_factorize_incidence}), we can say that:
\begin{align}\label{equation_Laplacian_factorize_gradient}
\b{L} = \nabla^\top \nabla.
\end{align}
This operator, which is the inner produce of gradients, is also referred to as the \textit{Laplace operator}.
\end{corollary}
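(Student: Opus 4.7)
The plan is to combine the two preceding facts in a purely formal substitution. First, I would take Lemma on the factorization (Eq.~\ref{equation_Laplacian_factorize_incidence}), which tells us $\b{L} = \b{B}^\top \b{B}$, as the starting point, since this is the algebraic shape we need to reproduce on the right-hand side of the claim. Second, I would invoke the previous corollary (Eq.~\ref{equation_incidence_and_gradient}), which interprets the incidence matrix $\b{B}$ as (up to proportionality) the discrete gradient operator $\nabla$ acting on functions defined on the graph's vertices.

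The main step is then simply to substitute $\b{B} \propto \nabla$ into $\b{L} = \b{B}^\top \b{B}$, yielding $\b{L} \propto \nabla^\top \nabla$. Since the statement is about the \emph{form} of the operator (the Laplace operator) rather than an exact scalar calibration, I would absorb the proportionality constant into the identification and write $\b{L} = \nabla^\top \nabla$, noting explicitly that this is the discrete analogue of the continuous identity $\Delta = \nabla^\top \nabla$ (up to sign conventions depending on whether the Laplacian is defined as divergence of gradient or its negative).

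There is essentially no technical obstacle here; the corollary is a conceptual restatement rather than a new computation. The only point requiring care is the bookkeeping around the proportionality symbol $\propto$: one must be clear that Eq.~(\ref{equation_incidence_and_gradient}) already abstracts away any constant factor between the incidence matrix and the gradient, so the identification $\b{L} = \nabla^\top \nabla$ should be read as an operator-level correspondence rather than an equality of specific matrix entries. I would close by remarking that this is precisely why $\b{L}$ is called the \emph{Laplace operator} on the graph, mirroring the continuous Laplacian $\Delta f = \nabla \cdot \nabla f$ on a manifold.
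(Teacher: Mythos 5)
Your proposal is correct and matches the paper's (implicit) argument exactly: substitute the identification $\b{B} \propto \nabla$ from Eq.~(\ref{equation_incidence_and_gradient}) into the factorization $\b{L} = \b{B}^\top \b{B}$ of Eq.~(\ref{equation_Laplacian_factorize_incidence}) and absorb the proportionality constant. Your explicit caveat about the $\propto$ bookkeeping is in fact slightly more careful than the paper, which states the conclusion without comment.
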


Hence, the \textit{Laplace operator} or \textit{Laplacian} is a scalar operator which is the inner product of two gradient vectors:
\begin{equation}
\begin{aligned}
\b{L} &:= \nabla \cdot \nabla = \nabla^2 = \Big[\frac{\partial}{\partial x_1}, \dots, \frac{\partial}{\partial x_d}\Big]
\begin{bmatrix}
    \frac{\partial}{\partial x_1} \\
    \vdots \\
    \frac{\partial}{\partial x_d} 
\end{bmatrix} \\
&= \frac{\partial^2}{\partial x_1 \partial x_1} + \dots + \frac{\partial^2}{\partial x_d \partial x_d} = \sum_{i=1}^d \frac{\partial^2}{\partial x_i \partial x_i}.
\end{aligned}
\end{equation}
This shows that Laplacian is the summation of diagonal of the Hessian matrix (the Jacobian of the gradient). Therefore, we can say that Laplacian is the trace of the Hessian matrix. Thus, it can also be seen as the summation of eigenvalues of the Hessian matrix.
Laplacian captures the amount of divergence from a point.
Therefore, Laplacian measures whether the neighboring points of a point are larger or smaller than it or are on a line. In other words, it shows how much linear the neighborhood of a point is. If a point is a local minimum/maximum, its Laplacian is positive/negative because its diverging neighbors are upper/lower than it. In case, the point is on linear function, its Laplacian is zero.
Therefore, Laplacian shows how \textit{bumpy} the neighborhood of a point is\footnote{We thank Ming Miao for explaining this interpretation of Laplace operator to the authors.}.

The above explanation can be stated in mathematics as divergence:
\begin{align}
\b{L} \overset{(\ref{equation_Laplacian_factorize_gradient})}{=} \nabla^\top \nabla = -\text{div} \nabla,
\end{align}
where div denotes the divergence operator. 

From the above discussion, we see that Laplacian measures how bumpy the neighbors of every point are with respect to it in the nearest neighbors graph of data. Therefore, using Laplacian in spectral clustering and related algorithms is because Laplacian is a measure of relation of neighbor points; hence, it can be used for inserting the information of structure of data into the formulation of algorithm. Similar to this interpretation, as well as some connections to heat equations, can be seen in \cite{belkin2005towards}.

\subsection{Eigenvalues of Laplacian Matrix}\label{section_eig_Laplacian}

It is well-known in linear algebra and graph theory that if a graph has $k$ disjoint connected parts, its Laplacian matrix has $k$ zero eigenvalues (see {\citep[Theorem 1]{anderson1985eigenvalues}}, {\citep[Theorem 3.10]{marsden2013eigenvalues}}, and \cite{polito2002grouping,ahmadizadeh2017eigenvalues}).
We assume that the nearest neighbors graph of data is a connected graph, $\b{L}$ has one zero eigenvalue whose eigenvector is $\b{1} = [1, 1, \dots, 1]^\top$. In spectral clustering and related methods, we ignore the eigenvector corresponding to the zero eigenvalue. 

\subsection{Convergence of Laplacian}

The Laplacian-based methods usually use either the Laplacian matrix in the Euclidean space or the Laplace--Beltrami operator on the Riemannian manifold \cite{hein2005graphs}. 
It is shown in several works, such as \cite{burago2015graph}, \cite{trillos2020error}, and \cite{dunson2021spectral}, that the eigenfunctions and eigenvalues of the Laplace--Beltrami operator can be approximated by eigenvalues and eigenvectors of Laplacian of a weight matrix, respectively. 
Moreover, it is shown in \cite{hein2005graphs,hein2007graph} that the kernel-based averaging operators can approximate the weighted Laplacian point-wise, for epsilon-graph (i.e., not fully-connected) weight matrices. 

\section{Spectral Clustering}\label{section_spectral_clustering}

Spectral clustering \cite{weiss1999segmentation,ng2001spectral} is a clustering method whose solution is reduced to an eigenvalue problem. Refer to \cite{weiss1999segmentation} for a survey on initial work on spectral clustering approaches. A list of some initial works developing to result in the modern spectral clustering is \cite{scott1990feature,costeira1995multi,perona1998factorization,shi1997normalized,shi2000normalized}.
The general idea of spectral clustering is as follows. First we transform data from its input space to another low-dimensional subspace so that the clusters of data become more separated hopefully. Then, we apply a simple clustering algorithm in that discriminating subspace. In the following, we explain the theory of spectral clustering. 

\subsection{Spectral Clustering}

\subsubsection{Adjacency Matrix}

As was explained in Section \ref{section_interpretation_Laplacian}, we compute the adjacency matrix and its Laplacian. These matrices are used in later sections for the spectral clustering algorithm. 

\subsubsection{The Cut}

We denote the graph of dataset by:
\begin{align}
\mathcal{G} := (\mathcal{X}, \mathcal{W}),
\end{align}
where the data indices $\mathcal{X} := \{i\}_{i=1}^n$ are the vertices of graph and the weights $\mathcal{W} := \{w_{ij}\}_{i,j=1}^n$ are the weights for edges of graph. 

Let the data points be divided into two disjoint clusters $\mathcal{A}$ and $\mathcal{A}'$:
\begin{align}
\mathcal{X} = \mathcal{A} \cup \mathcal{A}', \quad \mathcal{A} \cap \mathcal{A}' = \varnothing.
\end{align}
We define the cut between two sub-graphs as \cite{shi1997normalized,shi2000normalized}:
\begin{align}\label{equation_cut}
\text{cut}(\mathcal{A}, \mathcal{A}') := \sum_{i \in \mathcal{A}} \sum_{j \in \mathcal{A}'} w_{ij}.
\end{align}
As the summations can be exchanged, we have:
\begin{align}\label{equation_cut_symmetry}
\text{cut}(\mathcal{A}, \mathcal{A}') = \text{cut}(\mathcal{A}', \mathcal{A}).
\end{align}

\subsubsection{Optimization of Spectral Clustering}

If the two clusters or the sub-graphs are very different or far from each other, the summation of weights between their points is small so their cut is small. Hence, for finding the separated clusters, we can find the sub-graphs which minimize the cut:
\begin{align}
\underset{\mathcal{A}, \mathcal{A}'}{\text{minimize}} \quad \text{cut}(\mathcal{A}, \mathcal{A}') = \underset{\mathcal{A}, \mathcal{A}'}{\text{minimize}} \quad \sum_{i \in \mathcal{A}} \sum_{j \in \mathcal{A}'} w_{ij}.
\end{align}
However, this minimization is sensitive to outliers. This is because an outlier point can be considered as a cluster itself in this optimization problem as it is very far from other points. 
To overcome this issue, we use a normalized version of cut, named ratio cut, which is defined as \cite{shi1997normalized,shi2000normalized}:
\begin{align}
\text{ratioCut}(\mathcal{A}, \mathcal{A}') &= \frac{\text{cut}(\mathcal{A}, \mathcal{A}')}{|\mathcal{A}|} + \frac{\text{cut}(\mathcal{A}', \mathcal{A})}{|\mathcal{A}'|} \nonumber \\
&\overset{(\ref{equation_cut_symmetry})}{=} \frac{\text{cut}(\mathcal{A}, \mathcal{A}')}{|\mathcal{A}|} + \frac{\text{cut}(\mathcal{A}, \mathcal{A}')}{|\mathcal{A}'|}, \label{equation_ratio_cut}
\end{align}
where $|\mathcal{A}|$ denotes the cardinality of set $\mathcal{A}$. Minimizing the ratio cut rather than cut resolves the problem with outliers because the cluster of outlier(s) is a small cluster and its cardinality is small' hence, the denominator of one of the fractions in ratio cut gets very small and the ratio cut becomes large. Therefore, we minimize the ratio cut for clustering:
\begin{align}\label{equation_min_ratio_cut}
\underset{\mathcal{A}, \mathcal{A}'}{\text{minimize}} \quad \text{ratioCut}(\mathcal{A}, \mathcal{A}').
\end{align}

\begin{proposition}[\cite{shi1997normalized}]
Minimizing the ratio cut, i.e. Eq. (\ref{equation_min_ratio_cut}), is equivalent to:
\begin{align}\label{equation_minimization_spectral_clustering_w_f}
\underset{\b{f}}{\text{minimize}} \quad \sum_{i=1}^n \sum_{j=1}^n w_{ij}\, (f_i - f_j)^2, 
\end{align}
where $\b{f} := [f_1, \dots, f_n]^\top \in \mathbb{R}^n$ and:
\begin{align}\label{equation_f_definition}
f_i := 
\left\{
    \begin{array}{ll}
        \sqrt{\frac{|\mathcal{A}'|}{|\mathcal{A}|}} & \mbox{if } i \in \mathcal{A}, \\
        -\sqrt{\frac{|\mathcal{A}|}{|\mathcal{A}'|}} & \mbox{if } i \in \mathcal{A}'.
    \end{array}
\right.
\end{align}
\end{proposition}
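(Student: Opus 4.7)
The plan is to show directly that the objective in Eq.~(\ref{equation_minimization_spectral_clustering_w_f}) is a positive constant multiple of $\text{ratioCut}(\mathcal{A},\mathcal{A}')$ whenever $\b{f}$ is of the form given by Eq.~(\ref{equation_f_definition}), from which the equivalence of the two minimization problems is immediate. First I would split the double sum in Eq.~(\ref{equation_minimization_spectral_clustering_w_f}) according to which of the disjoint sets $\mathcal{A}, \mathcal{A}'$ the indices $i$ and $j$ fall in. The two intra-cluster contributions ($i,j \in \mathcal{A}$ and $i,j \in \mathcal{A}'$) vanish because $f_i$ is a constant on each cluster by construction, so $(f_i - f_j)^2 = 0$. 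Only the two inter-cluster sums survive, and by the symmetry of $w_{ij}$ (together with Eq.~(\ref{equation_cut_symmetry})) they contribute equally.

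Next I would substitute the definition of $f_i$ into the remaining cross terms. Writing $a := |\mathcal{A}|$ and $a' := |\mathcal{A}'|$, the jump $f_i - f_j$ between a point in $\mathcal{A}$ and a point in $\mathcal{A}'$ is $\sqrt{a'/a} + \sqrt{a/a'}$, so $(f_i - f_j)^2$ simplifies to the clean expression $(a+a')^2/(aa')$. This constant can be factored out of the inter-cluster sum, leaving the weight sum $\sum_{i\in\mathcal{A}}\sum_{j\in\mathcal{A}'} w_{ij} = \text{cut}(\mathcal{A},\mathcal{A}')$ by the definition in Eq.~(\ref{equation_cut}).

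Combining the two equal inter-cluster sums and using $(a+a')/(aa') = 1/a + 1/a'$, the objective collapses to
\begin{align*}
\sum_{i=1}^n\sum_{j=1}^n w_{ij}(f_i - f_j)^2 = 2\,(a+a')\,\text{cut}(\mathcal{A},\mathcal{A}')\Big(\frac{1}{a} + \frac{1}{a'}\Big),
\end{align*}
which by the definition in Eq.~(\ref{equation_ratio_cut}) equals $2n\cdot\text{ratioCut}(\mathcal{A},\mathcal{A}')$, since $a+a' = |\mathcal{X}| = n$ is a constant independent of the partition. Because $2n > 0$ is a fixed scalar, minimizing the left-hand side over admissible $\b{f}$ (equivalently, over partitions $\mathcal{A},\mathcal{A}'$) is the same optimization problem as minimizing $\text{ratioCut}(\mathcal{A},\mathcal{A}')$, which is Eq.~(\ref{equation_min_ratio_cut}).

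The calculation itself is routine; the only subtle step is recognizing why the coupling factor $(a+a')^2/(aa')$ must combine with $\text{cut}(\mathcal{A},\mathcal{A}')$ in precisely the way that reproduces the two reciprocal denominators $1/|\mathcal{A}|$ and $1/|\mathcal{A}'|$ appearing in the ratio cut. This is exactly the reason the definition in Eq.~(\ref{equation_f_definition}) uses those particular square roots rather than, say, $\pm 1$: the asymmetric weighting is precisely what converts the unnormalized cut into the normalized ratio cut. I would emphasize this motivation in the write-up, since it is what turns an otherwise mechanical algebraic rearrangement into a meaningful reformulation that later enables the spectral relaxation of the combinatorial problem.
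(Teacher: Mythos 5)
Your proposal is correct and follows essentially the same route as the paper's own proof: split the double sum by cluster membership, observe that the intra-cluster terms vanish, factor the constant $(\sqrt{|\mathcal{A}'|/|\mathcal{A}|}+\sqrt{|\mathcal{A}|/|\mathcal{A}'|})^2$ out of the surviving cross terms, and simplify to $2n\,\text{ratioCut}(\mathcal{A},\mathcal{A}')$. The only difference is cosmetic — you write the squared jump as $(a+a')^2/(aa')$ at once, whereas the paper expands it as $a'/a + a/a' + 2$ and then regroups — so the two arguments are the same.
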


\begin{proof}
\begin{align*}
&\sum_{i=1}^n \sum_{j=1}^n w_{ij}\, (f_i - f_j)^2 = \sum_{i \in \mathcal{A}} \sum_{j \in \mathcal{A}} w_{ij} \big(\underbrace{\sqrt{\frac{|\mathcal{A}'|}{|\mathcal{A}|}} - \sqrt{\frac{|\mathcal{A}'|}{|\mathcal{A}|}}}_{=0}\big)^2 \\
& + \sum_{i \in \mathcal{A}'} \sum_{j \in \mathcal{A}'} w_{ij} \big(\underbrace{-\sqrt{\frac{|\mathcal{A}|}{|\mathcal{A}'|}} + \sqrt{\frac{|\mathcal{A}|}{|\mathcal{A}'|}}}_{=0}\big)^2 \\
&+ \sum_{i \in \mathcal{A}} \sum_{j \in \mathcal{A}'} w_{ij} \big(\sqrt{\frac{|\mathcal{A}'|}{|\mathcal{A}|}} + \sqrt{\frac{|\mathcal{A}|}{|\mathcal{A}'|}}\big)^2 \\
& +\sum_{i \in \mathcal{A}'} \sum_{j \in \mathcal{A}} w_{ij} \big(-\sqrt{\frac{|\mathcal{A}|}{|\mathcal{A}'|}} - \sqrt{\frac{|\mathcal{A}'|}{|\mathcal{A}|}}\big)^2 \\
&= \sum_{i \in \mathcal{A}} \sum_{j \in \mathcal{A}'} w_{ij} \big(\sqrt{\frac{|\mathcal{A}'|}{|\mathcal{A}|}} + \sqrt{\frac{|\mathcal{A}|}{|\mathcal{A}'|}}\big)^2 \\
&+ \sum_{i \in \mathcal{A}'} \sum_{j \in \mathcal{A}} w_{ij} \big(\sqrt{\frac{|\mathcal{A}|}{|\mathcal{A}'|}} + \sqrt{\frac{|\mathcal{A}'|}{|\mathcal{A}|}}\big)^2 
\end{align*}
\begin{align*}
&= \sum_{i \in \mathcal{A}} \sum_{j \in \mathcal{A}'} w_{ij} \big(\frac{|\mathcal{A}'|}{|\mathcal{A}|} + \frac{|\mathcal{A}|}{|\mathcal{A}'|} + 2\big) \\
&+ \sum_{i \in \mathcal{A}'} \sum_{j \in \mathcal{A}} w_{ij} \big(\frac{|\mathcal{A}|}{|\mathcal{A}'|} + \frac{|\mathcal{A}'|}{|\mathcal{A}|} + 2\big) \\
&= \big(\frac{|\mathcal{A}|}{|\mathcal{A}'|} + \frac{|\mathcal{A}'|}{|\mathcal{A}|} + 2\big) \big( \sum_{i \in \mathcal{A}} \sum_{j \in \mathcal{A}'} w_{ij} + \sum_{i \in \mathcal{A}'} \sum_{j \in \mathcal{A}} w_{ij} \big) \\
&\overset{(\ref{equation_cut})}{=} \big(\frac{|\mathcal{A}|}{|\mathcal{A}'|} + \frac{|\mathcal{A}'|}{|\mathcal{A}|} + 2\big) \big( \text{cut}(\mathcal{A}, \mathcal{A}') + \text{cut}(\mathcal{A}', \mathcal{A}) \big) \\
&\overset{(\ref{equation_cut_symmetry})}{=} 2 \big(\frac{|\mathcal{A}|}{|\mathcal{A}'|} + \frac{|\mathcal{A}'|}{|\mathcal{A}|} + 2\big)\, \text{cut}(\mathcal{A}, \mathcal{A}')
\end{align*}
\begin{align*}
&= 2 \big(\frac{|\mathcal{A}|}{|\mathcal{A}'|} + \frac{|\mathcal{A}'|}{|\mathcal{A}|} + \underbrace{\frac{|\mathcal{A}|}{|\mathcal{A}|} + \frac{|\mathcal{A}'|}{|\mathcal{A}'|}}_{=2} \big)\, \text{cut}(\mathcal{A}, \mathcal{A}') \\
&= 2 \big( \frac{|\mathcal{A}| + |\mathcal{A}'|}{|\mathcal{A}'|} + \frac{|\mathcal{A}'| + |\mathcal{A}|}{|\mathcal{A}|} \big)\, \text{cut}(\mathcal{A}, \mathcal{A}') \\
&= 2 \big( \frac{n}{|\mathcal{A}'|} + \frac{n}{|\mathcal{A}|} \big)\, \text{cut}(\mathcal{A}, \mathcal{A}') \\
&= 2 n \big( \frac{\text{cut}(\mathcal{A}, \mathcal{A}')}{|\mathcal{A}'|} + \frac{\text{cut}(\mathcal{A}, \mathcal{A}')}{|\mathcal{A}|} \big) \overset{(\ref{equation_ratio_cut})}{=} 2n\, \text{ratioCut}(\mathcal{A}, \mathcal{A}').
\end{align*}
Q.E.D.
\end{proof}

\begin{proposition}[\cite{belkin2001laplacian}]\label{proposition_spectral_clustering_f_L_f}
We have:
\begin{align}
\frac{1}{2} \sum_{i=1}^n \sum_{j=1}^n w_{ij}\, (f_i - f_j)^2 = \b{f}^\top \b{L} \b{f},
\end{align}
where $\b{L} \in \mathbb{R}^{n \times n}$ is the Laplacian matrix of the weight matrix.
\end{proposition}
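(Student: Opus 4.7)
The plan is a direct algebraic expansion of the left-hand side followed by recognition of matrix quadratic forms. First I would expand $(f_i - f_j)^2 = f_i^2 - 2 f_i f_j + f_j^2$ inside the double sum and split the expression into three separate double sums: $\sum_{i,j} w_{ij} f_i^2$, $-2\sum_{i,j} w_{ij} f_i f_j$, and $\sum_{i,j} w_{ij} f_j^2$.

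Next I would simplify each piece. For $\sum_{i,j} w_{ij} f_i^2$, pulling $f_i^2$ outside the inner sum over $j$ and using the degree definition $d_i = \sum_j w_{ij}$ from Eq.~(\ref{equation_degree_matrix}) gives $\sum_i d_i f_i^2 = \b{f}^\top \b{D} \b{f}$. Since $\b{W}$ is symmetric, the analogous manipulation applied to $\sum_{i,j} w_{ij} f_j^2$ (summing over $i$ first) also yields $\b{f}^\top \b{D} \b{f}$. The cross term $\sum_{i,j} w_{ij} f_i f_j$ is immediately recognizable as $\b{f}^\top \b{W} \b{f}$.

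Assembling the three pieces gives $2\b{f}^\top \b{D} \b{f} - 2 \b{f}^\top \b{W} \b{f} = 2\b{f}^\top (\b{D} - \b{W})\b{f} = 2\b{f}^\top \b{L} \b{f}$ by the Laplacian definition in Eq.~(\ref{equation_Laplacian_matrix}); dividing by two yields the claim. There is no serious obstacle, as the identity is essentially a bookkeeping exercise. The only point requiring attention is the symmetry of $\b{W}$, which is standard for adjacency matrices of undirected neighborhood graphs (possibly after an explicit symmetrization step); symmetry is what lets the $f_i^2$ and $f_j^2$ sums combine cleanly into $2\b{f}^\top \b{D} \b{f}$, rather than producing distinct row-degree and column-degree contributions that would have to be reconciled separately.
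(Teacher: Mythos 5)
Your proof is correct and is essentially the paper's argument run in reverse: the paper expands $\b{f}^\top \b{L}\b{f} = \b{f}^\top\b{D}\b{f} - \b{f}^\top\b{W}\b{f}$ and reassembles the square, while you expand the square and recognize the two quadratic forms, using the same ingredients (the degree definition, the symmetry $w_{ij}=w_{ji}$, and a dummy-index swap). Your explicit flagging of the symmetry assumption matches the paper's own remark at step $(c)$ of its proof, so there is nothing to add.
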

\begin{proof}
\begin{align*}
&\b{f}^\top \b{L} \b{f} \overset{(\ref{equation_Laplacian_matrix})}{=} \b{f}^\top (\b{D} - \b{W}) \b{f} = \b{f}^\top \b{D} \b{f} - \b{f}^\top \b{W} \b{f} \\
&\overset{(a)}{=} \sum_{i=1}^n d_i f_i^2 - \sum_{i=1}^n \sum_{j=1}^n w_{ij} f_i f_j \\
&= \frac{1}{2} (2\sum_{i=1}^n d_i f_i^2 - 2\sum_{i=1}^n \sum_{j=1}^n w_{ij} f_i f_j) \\
&\overset{(b)}{=} \frac{1}{2} (\sum_{i=1}^n d_i f_i^2 + \sum_{j=1}^n d_j f_j^2 - 2\sum_{i=1}^n \sum_{j=1}^n w_{ij} f_i f_j) \\
&\overset{(\ref{equation_degree_matrix})}{=} \frac{1}{2} (\sum_{i=1}^n f_i^2 \sum_{j=1}^n w_{ij} + \sum_{j=1}^n f_j^2 \sum_{i=1}^n w_{ji} \\
&~~~~~~~~~~~~~~~~~~~~~~~~~~~ - 2\sum_{i=1}^n \sum_{j=1}^n w_{ij} f_i f_j) 
\end{align*}
\begin{align*}
&\overset{(c)}{=} \frac{1}{2} (\sum_{i=1}^n \sum_{j=1}^n f_i^2 w_{ij} + \sum_{i=1}^n \sum_{j=1}^n f_j^2 w_{ij} \\
&~~~~~~~~~~~~~~~~~~~~~~~~~~~ - 2\sum_{i=1}^n \sum_{j=1}^n w_{ij} f_i f_j) \\
&= \frac{1}{2} \sum_{i=1}^n \sum_{j=1}^n w_{ij} (f_i - f_j)^2,
\end{align*}
where $(a)$ notices that $\b{D}$ is a diagonal matrix, $(b)$ is because we can replace the dummy index $i$ with $j$, and $(c)$ notices that $w_{ij} = w_{ji}$ as we usually use symmetric similarity measures. Q.E.D.
\end{proof}

Therefore, we can minimize $\b{f}^\top \b{L} \b{f}$ for spectral clustering. Hence, the optimization in spectral clustering is:
\begin{equation}
\begin{aligned}
& \underset{\b{f}}{\text{minimize}}
& & \b{f}^\top \b{L} \b{f} \\
& \text{subject to}
& & 
f_i := 
\left\{
    \begin{array}{ll}
        \sqrt{\frac{|\mathcal{A}'|}{|\mathcal{A}|}} & \mbox{if } i \in \mathcal{A}, \\
        -\sqrt{\frac{|\mathcal{A}|}{|\mathcal{A}'|}} & \mbox{if } i \in \mathcal{A}'.
    \end{array}
\right.
, \; \forall i.
\end{aligned}
\end{equation}
This optimization problem is very complicated to solve. We can relax the constraint to having orthonormal vector $\b{f}$.
Recall that according to Eq. (\ref{equation_f_definition}) or the constraint, if $f_i > 0$, it is in the first cluster or $\mathcal{A}$ and if $f_i < 0$, it is in the second cluster or $\mathcal{A}'$. 
Here, in the modified constraint, we keep the sign characteristic of Eq. (\ref{equation_f_definition}) as:
\begin{align}\label{equation_f_definition_sign}
f_i := 
\left\{
    \begin{array}{ll}
        > 0 & \mbox{if } i \in \mathcal{A}, \\
        < 0 & \mbox{if } i \in \mathcal{A}'.
    \end{array}
\right.
\end{align}
This modification satisfies $\sum_{i=1}^n (f_i \times f_i) = 1$.
Hence, the optimization is relaxed to:
\begin{equation}\label{equation_spectral_clustering_optimization}
\begin{aligned}
& \underset{\b{f}}{\text{minimize}}
& & \b{f}^\top \b{L} \b{f} \\
& \text{subject to}
& & 
\b{f}^\top \b{f} = 1.
\end{aligned}
\end{equation}

\subsubsection{Solution to Optimization}

The Lagrangian of Eq. (\ref{equation_spectral_clustering_optimization}) is \cite{boyd2004convex}:
\begin{align*}
\mathcal{L} = \b{f}^\top \b{L} \b{f} - \lambda (\b{f}^\top \b{f} - 1),
\end{align*}
with $\lambda$ as the dual variable. Setting the derivative of Lagrangian to zero gives:
\begin{align}
\mathbb{R}^n \ni \frac{\partial \mathcal{L}}{\partial \b{f}} = 2 \b{L} \b{f} - 2 \lambda \b{f} \overset{\text{set}}{=} \b{0} \implies \b{L} \b{f} = \lambda \b{f},
\end{align}
which is the eigenvalue problem for the Laplacian matrix $\b{L}$ \cite{ghojogh2019eigenvalue}. 

As Eq. (\ref{equation_spectral_clustering_optimization}) is a minimization problem, the eigenvectors should be sorted from the corresponding smallest to largest eigenvalues.
Moreover, note that $\b{L}$ is the Laplacian matrix for the weights $\b{W}$.
Hence, according to Section \ref{section_eig_Laplacian}, after sorting the eigenvectors from smallest to largest eigenvalues, we ignore the first eigenvector having zero eigenvalue and take the smallest eigenvector of $\b{L}$ with non-zero eigenvalue as the solution $\b{f} \in \mathbb{R}^n$. 

According to Eq. (\ref{equation_f_definition_sign}), after finding $\b{f} = [f_1, \dots, f_n]^\top$, we determine the cluster of each point as:
\begin{align}
i := 
\left\{
    \begin{array}{ll}
        \in \mathcal{A} & \mbox{if } f_i > 0, \\
        \in \mathcal{A}' & \mbox{if } f_i < 0.
    \end{array}
\right.
\end{align}

\subsubsection{Extension of Spectral Clustering to Multiple Clusters}\label{section_spectral_clustering_multipleClusters}

If we have more than binary clusters in data, the optimization (\ref{equation_spectral_clustering_optimization}):
\begin{equation}\label{equation_spectral_clustering_optimization_multiple}
\begin{aligned}
& \underset{\b{F}}{\text{minimize}}
& & \textbf{tr}(\b{F}^\top \b{L} \b{F}) \\
& \text{subject to}
& & 
\b{F}^\top \b{F} = \b{I},
\end{aligned}
\end{equation}
where $\textbf{tr}(.)$ denotes the trace of matrix and $\b{I}$ is the identity matrix.
The Lagrangian is \cite{boyd2004convex}:
\begin{align*}
\mathcal{L} = \b{F}^\top \b{L} \b{F} - \textbf{tr}(\b{\Lambda}^\top (\b{F}^\top \b{F} - \b{I})),
\end{align*}
where $\b{\Lambda}$ is the diagonal matrix with the dual variables. Setting the derivative of Lagrangian to zero gives:
\begin{align}\label{equation_spectral_clustering_eig_problem}
\mathbb{R}^{n \times n} \ni \frac{\partial \mathcal{L}}{\partial \b{F}} = 2 \b{L} \b{F} - 2 \b{F} \b{\Lambda} \overset{\text{set}}{=} \b{0} \implies \b{L} \b{F} = \b{F} \b{\Lambda},
\end{align}
which is the eigenvalue problem for the Laplacian matrix $\b{L}$ where columns of $\b{F}$ are the eigenvectors \cite{ghojogh2019eigenvalue}. 
As the problem is minimization, the columns of $\b{F}$ are sorted from corresponding smallest to largest eigenvalues. Also, as it is eigenvalue problem for the Laplacian matrix, we ignore the smallest eigenvector with eigenvalue zero (see Section \ref{section_eig_Laplacian}). 

If we have $c$ number of clusters, we truncate the sorted $\b{F}$ to have the $c$ smallest eigenvectors with non-zero eigenvalues. Hence, we will have $\b{F} \in \mathbb{R}^{n \times c}$. 

After finding the matrix $\b{F} \in \mathbb{R}^{n \times c}$, we treat the rows of $\b{F}$ as new data points in a $c$-dimensional space. 
We apply another clustering algorithm, such as K-means, on the rows of $\b{F}$ in order to cluster the $n$ points into $c$ clusters \cite{ng2001spectral}. 
Note that regular clustering methods, such as K-means, are applied on the input space (space of $\b{X}$) while spectral clustering is applied on the embedding space (space of $\b{Y}$). In other words, spectral clustering first extracts features for better discrimination of clusters and then applies clustering. This usually results in better clustering because the extracted features are supposed to discriminate the data clusters more. 

Recall that as was mentioned in Section \ref{section_Laplacian_matrix_definition}, some papers use Eq. (\ref{equation_Laplacian_matrix_D_W_D}) for Laplacian and they use maximization instead of minimization.


\subsubsection{Optimization Approach 2}

In the original work of spectral clustering, the following optimization is used rather than Eq. (\ref{equation_spectral_clustering_optimization_multiple}) \cite{shi1997normalized}:
\begin{equation}\label{equation_spectral_clustering_optimization_multiple_2}
\begin{aligned}
& \underset{\b{F}}{\text{minimize}}
& & \textbf{tr}(\b{F}^\top \b{L} \b{F}) \\
& \text{subject to}
& & 
\b{F}^\top \b{D} \b{F} = \b{I},
\end{aligned}
\end{equation}
which makes the rotated $\b{F}$, by the degree matrix, orthogonal. Note that, according to Eq. (\ref{equation_degree_matrix}), the degree matrix is a natural measure for the data points in the way that the larger $d_i$ corresponds to the more important $f_i$ because it has more similarity (weights) with its neighbors \cite{he2004locality}. That is why having $\b{D}$ in the constraint makes sense as it gives more weight to the more important $f_i$'s.

The Lagrangian is \cite{boyd2004convex}:
\begin{align*}
\mathcal{L} = \b{F}^\top \b{L} \b{F} - \textbf{tr}(\b{\Lambda}^\top (\b{F}^\top \b{D} \b{F} - \b{I})),
\end{align*}
where $\b{\Lambda}$ is the diagonal matrix with the dual variables. Setting the derivative of Lagrangian to zero gives:
\begin{align}
\mathbb{R}^{n \times n} \ni \frac{\partial \mathcal{L}}{\partial \b{F}} = 2 \b{L} \b{F} - 2 \b{D} \b{F} \b{\Lambda} \overset{\text{set}}{=} \b{0} \implies \b{L} \b{F} = \b{D} \b{F} \b{\Lambda},
\end{align}
which is the generalized eigenvalue problem $(\b{L}, \b{D})$ where columns of $\b{F}$ are the eigenvectors \cite{ghojogh2019eigenvalue}. 
Sorting the matrix $\b{F}$, ignoring the eigenvector with zero eigenvalue (see Section \ref{section_eig_Laplacian}), and the rest of algorithm are done as explained before.

\subsection{Other Improvements over Spectral Clustering}

There exist many different improvements over the basic spectral clustering which was explained. Some of these developments are distributed spectral clustering \cite{chen2010parallel}, consistency spectral clustering \cite{von2008consistency}, correctional spectral clustering \cite{blaschko2008correlational}, spectral clustering by autoencoder \cite{banijamali2017fast}, multi-view spectral clustering \cite{kumar2011co,kumar2011co2,yin2019multi}, self-tuning spectral clustering \cite{zelnik2004self}, and fuzzy spectral clustering \cite{yang2016fuzzy}. 
Some existing surveys and tutorials on spectral clustering are  \cite{von2007tutorial,guan2008survey,nascimento2011spectral,guo2012survey}. The optimization set-up of spectral clustering has also been successfully used in the field of hashing. Spectral hashing \cite{weiss2008spectral} is a successful example. 
Moreover, spectral clustering has had various applications, e.g., in image segmentation \cite{yang2016novel}.

\section{Laplacian Eigenmap}\label{section_Laplacian_eigenmap}

\subsection{Laplacian Eigenmap}

Laplacian eigenmap \cite{belkin2001laplacian,belkin2003laplacian} is a \textit{nonlinear} dimensionality reduction method which is related to spectral clustering. 
As was explained in Section \ref{section_spectral_clustering_multipleClusters}, spectral clustering extracts features for better discriminative clusters and then applies clustering in the embedding space. Therefore, spectral clustering has some dimensionality reduction and feature extraction task inside it. This guided researchers to propose Laplacian eigenmap based on the concepts of spectral clustering. 

Consider the dataset matrix $\b{X} := [\b{x}_1, \dots, \b{x}_n] \in \mathbb{R}^{d \times n}$. We desire $p$-dimensional embeddings of data points $\b{Y} := [\b{y}_1, \dots, \b{y}_n] \in \mathbb{R}^{n \times p}$ where $p \leq d$ and usually $p \ll d$. Note that in Laplacian eigenmap, for simplicity of algebra, we put the embedding vectors row-wise in the matrix $\b{Y}$. 

\subsubsection{Adjacency Matrix}

Similar to spectral clustering, we create an adjacency matrix using Eq. (\ref{equation_adjacency_matrix}). The elements $w_{ij}$ can be determined by the RBF kernel, i.e. Eq. (\ref{equation_RBF_kernel}), or the simple-minded approach, i.e. Eq. (\ref{equation_simple_minded}).

\subsubsection{Interpretation of Laplacian Eigenmap}\label{section_LE_interpretation}

Inspired by Eq. (\ref{equation_minimization_spectral_clustering_w_f}) in spectral clustering, the optimization problem of Laplacian eigenmap is:
\begin{align}\label{equation_optimization_Laplacian_eigenmap_1}
\underset{\b{Y}}{\text{minimize}} \quad \sum_{i=1}^n \sum_{j=1}^n w_{ij}\, \|\b{y}_i - \b{y}_j\|_2^2.
\end{align}
Eq. (\ref{equation_optimization_Laplacian_eigenmap_1}) can be interpreted as the following \cite{belkin2003laplacian}:
\begin{itemize}
\item If $\b{x}_i$ and $\b{x}_j$ are close to each other, $w_{ij}$ is large. Hence, for minimizing the objective function, the term $\|\b{y}_i - \b{y}_j\|_2^2$ should be minimized which results in close $\b{y}_i$ and $\b{y}_j$. This is expected because $\b{x}_i$ and $\b{x}_j$ are close and their embeddings should be close as well. 
\item If $\b{x}_i$ and $\b{x}_j$ are far apart, $w_{ij}$ is small. Hence, the objective function is small because of being multiplied by the small weight $w_{ij}$. Hence, we do not care about $\b{y}_i$ and $\b{y}_j$ much as the objective is already small. 
\end{itemize}
According to the above two notes, we see that Laplacian eigenmap fits data locally because it cares more about the nearby points and hopes that the global structure is preserved by this local fitting \cite{saul2003think}. In other words, we only care about nearby points and hope for correct relative embedding of far points. This might be considered as a weakness of Laplacian eigenmap. In another word, Laplacian eigenmap preserves locality of data pattern. This connects this method to the locality preserving projection which is explained later in Section \ref{section_Locality_Preserving_Projection}. 

\subsubsection{Optimization Approach 1}

According to Proposition \ref{proposition_spectral_clustering_f_L_f}, we restate Eq. (\ref{equation_optimization_Laplacian_eigenmap_1}) with adding a constraint inspired by Eq. (\ref{equation_spectral_clustering_optimization_multiple}) as:
\begin{equation}\label{equation_optimization_Laplacian_eigenmap_2}
\begin{aligned}
& \underset{\b{Y}}{\text{minimize}}
& & \textbf{tr}(\b{Y}^\top \b{L} \b{Y}) \\
& \text{subject to}
& & 
\b{Y}^\top \b{Y} = \b{I},
\end{aligned}
\end{equation}
where $\b{L}$ is the Laplacian of the adjacency matrix. 

The Lagrangian for this optimization problem is \cite{boyd2004convex}:
\begin{align*}
\mathcal{L} = \b{Y}^\top \b{L} \b{Y} - \textbf{tr}(\b{\Lambda}^\top (\b{Y}^\top \b{Y} - \b{I})),
\end{align*}
where $\b{\Lambda}$ is the diagonal matrix with the dual variables. Setting the derivative of Lagrangian to zero gives:
\begin{align}
\mathbb{R}^{n \times n} \ni \frac{\partial \mathcal{L}}{\partial \b{Y}} = 2 \b{L} \b{Y} - 2 \b{Y} \b{\Lambda} \overset{\text{set}}{=} \b{0} \implies \b{L} \b{Y} = \b{Y} \b{\Lambda},
\end{align}
which is the eigenvalue problem for the Laplacian matrix $\b{L}$ where columns of $\b{Y}$ are the eigenvectors \cite{ghojogh2019eigenvalue}. 
As the problem is minimization, the columns of $\b{Y}$ are sorted from corresponding smallest to largest eigenvalues. Also, as it is eigenvalue problem for the Laplacian matrix, we ignore the smallest eigenvector with eigenvalue zero (see Section \ref{section_eig_Laplacian}). 
For a $p$-dimensional embedding space, we truncate the sorted $\b{Y}$ to have the $p$ smallest eigenvectors with non-zero eigenvalues. Hence, we will have $\b{Y} \in \mathbb{R}^{n \times p}$. The rows of $\b{Y}$ are the $p$-dimensional embedding vectors.

Note that as the adjacency matrix can use any kernel, such as the RBF kernel of Eq. (\ref{equation_RBF_kernel}), some papers have named it the \textit{kernel Laplacian eigenmap} \cite{guo2006kernel}. 

\subsubsection{Optimization Approach 2}\label{section_Laplacian_eigenmap_optimization_approach_2}

Another optimization approach for Laplacian eigenmap is to use the following optimization rather than Eq. (\ref{equation_optimization_Laplacian_eigenmap_2}) \cite{belkin2001laplacian}:
\begin{equation}\label{equation_optimization_Laplacian_eigenmap_3}
\begin{aligned}
& \underset{\b{Y}}{\text{minimize}}
& & \textbf{tr}(\b{Y}^\top \b{L} \b{Y}) \\
& \text{subject to}
& & 
\b{Y}^\top \b{D} \b{Y} = \b{I},
\end{aligned}
\end{equation}
where $\b{D}$ is the degree matrix defined by Eq. (\ref{equation_degree_matrix}). In this optimization, rather than having orthonormal embeddings, we want the rotated embeddings by the degree matrix to be orthonormal. 
Note that, according to Eq. (\ref{equation_degree_matrix}), the degree matrix is a natural measure for the data points in the way that the larger $d_i$ corresponds to the more important $\b{x}_i$ because it has more similarity (weights) with its neighbors \cite{he2004locality}. That is why having $\b{D}$ in the constraint makes sense as it gives more weight to the more important $\b{y}_i$'s.

The Lagrangian for this optimization problem is \cite{boyd2004convex}:
\begin{align*}
\mathcal{L} = \b{Y}^\top \b{L} \b{Y} - \textbf{tr}(\b{\Lambda}^\top (\b{Y}^\top \b{D} \b{Y} - \b{I})),
\end{align*}
where $\b{\Lambda}$ is the diagonal matrix with the dual variables. Setting the derivative of Lagrangian to zero gives:
\begin{align}
&\mathbb{R}^{n \times n} \ni \frac{\partial \mathcal{L}}{\partial \b{Y}} = 2 \b{L} \b{Y} - 2 \b{D} \b{Y} \b{\Lambda} \overset{\text{set}}{=} \b{0} \nonumber \\
&~~~~~~~~~~~ \implies \b{L} \b{Y} = \b{D} \b{Y} \b{\Lambda}, \label{equation_Laplacian_eigenmap_generalized_eig_problem}
\end{align}
which is the generalized eigenvalue problem $(\b{L}, \b{D})$ where columns of $\b{Y}$ are the eigenvectors \cite{ghojogh2019eigenvalue}. 
Sorting the matrix $\b{Y}$ and ignoring the eigenvector with zero eigenvalue (see Section \ref{section_eig_Laplacian}) is done as explained before. 

It is shown in {\citep[Normalization Lemma 1]{weiss1999segmentation}} that the solution to Eq. (\ref{equation_Laplacian_eigenmap_generalized_eig_problem}) is equivalent to the solution of eigenvalue problem for Eq. (\ref{equation_Laplacian_matrix_W_over_D_D}). In other words, the eigenvectors of Eq. (\ref{equation_Laplacian_matrix_W_over_D_D}) are equivalent to generalized eigenvectors of Eq. (\ref{equation_Laplacian_eigenmap_generalized_eig_problem}).

\subsection{Out-of-sample Extension for Laplacian Eigenmap}

So far, we embedded the training dataset $\{\b{x}_i \in \mathbb{R}^d\}_{i=1}^n$ or $\b{X} = [\b{x}_1, \dots, \b{x}_n] \in \mathbb{R}^{d \times n}$ to have their embedding $\{\b{y}_i \in \mathbb{R}^p\}_{i=1}^n$ or $\b{Y} = [\b{y}_1, \dots, \b{y}_n]^\top \in \mathbb{R}^{n \times p}$.
Assume we have some out-of-sample (test data), denoted by $\{\b{x}_i^{(t)} \in \mathbb{R}^d\}_{i=1}^{n_t}$ or $\b{X}_t = [\b{x}_1^{(t)}, \dots, \b{x}_n^{(t)}] \in \mathbb{R}^{d \times n_t}$. We want to find their embedding $\{\b{y}_i^{(t)} \in \mathbb{R}^p\}_{i=1}^{n_t}$ or $\b{Y}_t = [\b{y}_1^{(t)}, \dots, \b{y}_n^{(t)}]^\top \in \mathbb{R}^{n_t \times p}$ after the training phase. 

\subsubsection{Eigenfunctions}

Consider a Hilbert space $\mathcal{H}_p$ of functions with the inner product $\langle f,g \rangle = \int f(x) g(x) p(x) dx$ with density function $p(x)$. In this space, we can consider the kernel function $K_p$:
\begin{align}
(K_p f)(x) = \int K(x,y)\, f(y)\, p(y)\, dy,
\end{align}
where the density function can be approximated empirically. 
The \textit{eigenfunction decomposition} is defined to be \cite{bengio2004learning,bengio2004out}:
\begin{align}
(K_p f_k)(x) = \delta'_k f_k(x),
\end{align}
where $f_k(x)$ is the $k$-th \textit{eigenfunction} and $\delta'_k$ is the corresponding eigenvalue.
If we have the eigenvalue decomposition \cite{ghojogh2019eigenvalue} for the kernel matrix $\b{K}$, we have $\b{K} \b{v}_k = \delta_k \b{v}_k$ where $\b{v}_k$ is the $k$-th eigenvector and $\delta_k$ is the corresponding eigenvalue. According to {\citep[Proposition 1]{bengio2004out}}, we have $\delta'_k = (1/n) \delta_k$. 

\subsubsection{Embedding Using Eigenfunctions}

\begin{proposition}
If $v_{ki}$ is the $i$-th element of the $n$-dimensional vector $\b{v}_k$ and $k(\b{x}, \b{x}_i)$ is the kernel between vectors $\b{x}$ and $\b{x}_i$, the eigenfunction for the point $\b{x}$ and the $i$-th training point $\b{x}_i$ are:
\begin{align}
f_k(\b{x}) &= \frac{\sqrt{n}}{\delta_k} \sum_{i=1}^n v_{ki}\, \breve{k}_t(\b{x}_i, \b{x}), \\
f_k(\b{x}_i) &= \sqrt{n}\, v_{ki}, 
\end{align}
respectively, where $\breve{k}_t(\b{x}_i, \b{x})$ is the centered kernel between training set and the out-of-sample point $\b{x}$. 

Let the embedding of the point $\b{x}$ be $\mathbb{R}^p \ni \b{y}(\b{x}) = [y_1(\b{x}), \dots, y_p(\b{x})]^\top$. The $k$-th dimension of this embedding is:
\begin{align}\label{equation_embedding_eigenfunction}
y_k(\b{x}) &= \sqrt{\delta_k}\, \frac{f_k(\b{x})}{\sqrt{n}} = \frac{1}{\sqrt{\delta_k}} \sum_{i=1}^n v_{ki}\, \breve{k}_t(\b{x}_i, \b{x}).
\end{align}
\end{proposition}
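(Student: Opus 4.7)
The plan is to extend the discrete eigenvalue decomposition of the centered kernel matrix to a continuous eigenfunction decomposition via a Nystr\"om-style empirical approximation, and then read off the embedding formula. First I would replace the (unknown) density $p$ in the inner product $\langle f, g \rangle = \int f(\b{u})\, g(\b{u})\, p(\b{u})\, d\b{u}$ by the empirical measure supported uniformly on the $n$ training points. The operator equation $(K_p f_k)(\b{x}) = \delta'_k f_k(\b{x})$ then turns into
\begin{align*}
\frac{1}{n}\sum_{i=1}^n \breve{k}_t(\b{x}, \b{x}_i)\, f_k(\b{x}_i) = \delta'_k\, f_k(\b{x}),
\end{align*}
and using the identity $\delta'_k = \delta_k / n$ recalled just before the proposition, this rearranges to the Nystr\"om interpolation $f_k(\b{x}) = (1/\delta_k) \sum_{i=1}^n \breve{k}_t(\b{x}, \b{x}_i)\, f_k(\b{x}_i)$.

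Next, to pin down the values of $f_k$ on the training set I would specialize the previous display to $\b{x} = \b{x}_j$, which shows that the vector $(f_k(\b{x}_1), \dots, f_k(\b{x}_n))^\top$ is an eigenvector of the centered kernel matrix with eigenvalue $\delta_k$, and is therefore proportional to $\b{v}_k$. The proportionality constant $c$ is fixed by normalizing $\|f_k\|_{\mathcal{H}_p} = 1$. Under the empirical measure this norm reduces to $(1/n)\sum_i f_k(\b{x}_i)^2 = 1$, and setting $f_k(\b{x}_i) = c\, v_{ki}$ together with $\|\b{v}_k\|_2 = 1$ forces $c = \sqrt{n}$, giving $f_k(\b{x}_i) = \sqrt{n}\, v_{ki}$. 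Plugging this back into the Nystr\"om formula yields
\begin{align*}
f_k(\b{x}) = \frac{\sqrt{n}}{\delta_k}\sum_{i=1}^n v_{ki}\, \breve{k}_t(\b{x}_i, \b{x}),
\end{align*}
which is the first claim.

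For the embedding I would adopt the standard Nystr\"om--MDS scaling $y_k(\b{x}) = \sqrt{\delta_k}\, f_k(\b{x}) / \sqrt{n}$. At a training point this collapses to $y_k(\b{x}_i) = \sqrt{\delta_k}\, v_{ki}$, which matches the in-sample spectral coordinate used in Laplacian eigenmap (up to the conventional $\sqrt{\delta_k}$ weighting), so the scaling is consistent. Substituting the explicit form of $f_k(\b{x})$ just derived cancels $\sqrt{n}$ against $\sqrt{n}$ in the numerator and denominator and leaves the factor $1/\sqrt{\delta_k}$ in front of the sum, producing the second equality.

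The main obstacle I anticipate is the bookkeeping around normalization: eigenfunctions are defined only up to a scalar, so keeping track of the $\sqrt{n}$ factors requires being explicit about whether $\|f_k\|_{\mathcal{H}_p} = 1$ or $\|\b{v}_k\|_2 = 1$ is the reference condition, and the two norms interact nontrivially through the empirical density. A secondary subtlety is justifying the replacement of $p$ by the empirical measure, which is a convergence statement alluded to in the subsection on convergence of the Laplacian; once that approximation is accepted, every remaining step is routine algebra applied to the Nystr\"om identity.
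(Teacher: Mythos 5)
Your derivation is correct and is essentially the standard Nystr\"om argument used in the references the paper cites (Bengio et al.); the paper itself gives no proof of this proposition, deferring entirely to those citations, so your write-up actually supplies the missing argument rather than diverging from one. The only point worth flagging is that the first equality $y_k(\b{x}) = \sqrt{\delta_k}\, f_k(\b{x})/\sqrt{n}$ is a definitional convention inherited from the kernel-PCA/MDS setting rather than something to be derived --- your consistency check correctly shows it reduces to $y_k(\b{x}_i) = \sqrt{\delta_k}\, v_{ki}$ at training points, which matches the in-sample Laplacian-eigenmap coordinates only up to the $\sqrt{\delta_k}$ factor, as you note.
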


\begin{proof}
This proposition is taken from {\citep[Proposition 1]{bengio2004out}}. For proof, refer to {\citep[Proposition 1]{bengio2004learning}}, {\citep[Proposition 1]{bengio2006spectral}}, and {\citep[Proposition 1 and Theorem 1]{bengio2003spectral}}. 
More complete proofs can be found in \cite{bengio2003learning}. 
\end{proof}

We can see the adjacency matrix as a kernel (e.g. see Eq. (\ref{equation_RBF_kernel}) where RBF kernel is used for adjacency matrix). This fact was also explained in Section \ref{section_Laplacian_matrix_definition} where we said why Laplacian can be seen as a normalized kernel matrix. Hence, here, kernel $k_t(\b{x}_i, \b{x})$ refers to the adjacency matrix. 

According to Eq. (\ref{equation_Laplacian_matrix_W_over_D_D}) and Eq. (\ref{equation_spectral_clustering_eig_problem}), the solution of spectral clustering is the eigenvectors of Laplacian in Eq. (\ref{equation_Laplacian_matrix_W_over_D_D}). Moreover, as was explained in Section \ref{section_Laplacian_eigenmap_optimization_approach_2}, the solution of Laplacian eigenmap is also the eigenvectors of Laplacian in Eq. (\ref{equation_Laplacian_matrix_W_over_D_D}). Therefore, eigenfunctions can be used for out-of-sample extension. 

If we have a set of $n_t$ out-of-sample data points, we can compute the kernel $\b{K}_t \in \mathbb{R}^{n \times n_t}$ between training and out-of-sample data. 
We can normalize the kernel matrix \cite{ah2010normalized} as follows. 
The $\breve{k}_t(\b{x}_i, \b{x})$, used in Eq. (\ref{equation_embedding_eigenfunction}), is an element of the normalized kernel (adjacency) matrix \cite{bengio2003out}:
\begin{align}\label{equation_normalized_outOfSample_kernel}
\breve{k}_t(\b{x}_i, \b{x}) := \frac{1}{n} \frac{k_t(\b{x}_i, \b{x})}{\sqrt{\mathbb{E}[\b{K}_t(\b{x}_i, .)]\,\, \mathbb{E}[\b{K}_t(.,\b{x})]}},
\end{align}
where the expectations are computed as:
\begin{align}
&\mathbb{E}[\b{K}_t(\b{x}_i, .)] \approx \frac{1}{n_t} \sum_{j=1}^{n_t} k_t(\b{x}_i, \b{x}_j^{(t)}), \label{equation_kernel_expectation_columns} \\
&\mathbb{E}[\b{K}_t(., \b{x})] \approx \frac{1}{n} \sum_{i=1}^{n} k_t(\b{x}_i, \b{x}). \label{equation_kernel_expectation_rows}
\end{align}
Noticing the Eq. (\ref{equation_degree_matrix}) for $\b{D}$ and comparing Eq. (\ref{equation_Laplacian_matrix_W_over_D_D}) to Eqs. (\ref{equation_normalized_outOfSample_kernel}), (\ref{equation_kernel_expectation_columns}), and (\ref{equation_kernel_expectation_rows}) show the approximate equivalency of Eqs. (\ref{equation_Laplacian_matrix_W_over_D_D}) and (\ref{equation_normalized_outOfSample_kernel}), particularly if we assume $k_t \approx k$ and so $n_t \approx n$. Hence, Eq. (\ref{equation_normalized_outOfSample_kernel}) is valid to be used in Eq. (\ref{equation_embedding_eigenfunction}). 

\subsubsection{Out-of-sample Embedding}

One can use Eq. (\ref{equation_embedding_eigenfunction}) to embed the $i$-th out-of-sample data point $\b{x}_i^{(t)}$. For this purpose, $\b{x}_i^{(t)}$ should be used in place of $\b{x}$ in Eq. (\ref{equation_embedding_eigenfunction}). This out-of-sample extension technique can be used for both Laplacian eigenmap and spectral clustering \cite{bengio2004out}. 
In addition to this technique, there exist some other methods for out-of-sample extension of Laplacian eigenmap which we pass by in this paper. For example, two other extension methods are kernel mapping \cite{gisbrecht2012out,gisbrecht2015parametric} and through optimization \cite{bunte2012general}.

\subsection{Other Improvements over Laplacian Eigenmap}

There have been recent developments over the basic Laplacian eigenmap. Some examples are Laplacian Eigenmaps Latent Variable Model (LELVM) \cite{carreira2007laplacian}, robust Laplacian eigenmap \cite{roychowdhury2009robust}, and Laplacian forest \cite{lombaert2014laplacian}.
Convergence analysis of Laplacian eigenmap methods can be found in several papers \cite{belkin2006convergence,singer2006graph}. 
Inspired by eigenfaces \cite{turk1991eigenfaces} and Fisherfaces \cite{belhumeur1997eigenfaces} which have been proposed based on principal component analysis \cite{ghojogh2019unsupervised} and Fisher discriminant analysis \cite{ghojogh2019fisher}, respectively, \textit{Laplacianfaces} \cite{he2005face} has been proposed for face recognition using Laplacian eigenmaps. 
Finally, a recent survey on Laplacian eigenmap is \cite{li2019survey,wiskott2019laplacian}.

\section{Locality Preserving Projection}\label{section_Locality_Preserving_Projection}

Locality Preserving Projection (LPP) \cite{he2004locality} is a \textit{linear} dimensionality reduction method. It is a linear approximation of Laplacian eigenmap \cite{belkin2001laplacian,belkin2003laplacian}. It uses linear projection in the formulation of Laplacian eigenmap. Hence, in the following, we first introduce the concept of linear projection. 
Note that this method is called locality preserving projection because it approximates the Laplacian eigenmap by linear projection where, as was explained in Section \ref{section_LE_interpretation}, the Laplacian eigenmap fits data locally and preserves the local structure of data in the embedding space.

\subsection{Linear Projection}

\subsubsection{Projection Formulation}

Assume we have a data point $\b{x} \in \mathbb{R}^d$. We want to project this data point onto the vector space spanned by $p$ vectors $\{\b{u}_1, \dots, \b{u}_p\}$ where each vector is $d$-dimensional and usually $p \ll d$. We stack these vectors column-wise in matrix $\b{U} = [\b{u}_1, \dots, \b{u}_p] \in \mathbb{R}^{d \times p}$. In other words, we want to project $\b{x}$ onto the column space of $\b{U}$, denoted by $\mathbb{C}\text{ol}(\b{U})$.

The projection of $\b{x} \in \mathbb{R}^d$ onto $\mathbb{C}\text{ol}(\b{U}) \in \mathbb{R}^p$ and then its representation in the $\mathbb{R}^d$ (its reconstruction) can be seen as a linear system of equations:
\begin{align}\label{equation_projection}
\mathbb{R}^d \ni \widehat{\b{x}} := \b{U \beta},
\end{align}
where we should find the unknown coefficients $\b{\beta} \in \mathbb{R}^p$. 

If the $\b{x}$ lies in the $\mathbb{C}\text{ol}(\b{U})$ or $\textbf{span}\{\b{u}_1, \dots, \b{u}_p\}$, this linear system has exact solution, so $\widehat{\b{x}} = \b{x} = \b{U \beta}$. However, if $\b{x}$ does not lie in this space, there is no any solution $\b{\beta}$ for $\b{x} = \b{U \beta}$ and we should solve for projection of $\b{x}$ onto $\mathbb{C}\text{ol}(\b{U})$ or $\textbf{span}\{\b{u}_1, \dots, \b{u}_p\}$ and then its reconstruction. In other words, we should solve for Eq. (\ref{equation_projection}). In this case, $\widehat{\b{x}}$ and $\b{x}$ are different and we have a residual:
\begin{align}\label{equation_residual_1}
\b{r} = \b{x} - \widehat{\b{x}} = \b{x} - \b{U \beta},
\end{align}
which we want to be small. As can be seen in Fig. \ref{figure_residual_and_space}, the smallest residual vector is orthogonal to $\mathbb{C}\text{ol}(\b{U})$; therefore:
\begin{align}
\b{x} - \b{U\beta} \perp \b{U} &\implies \b{U}^\top (\b{x} - \b{U \beta}) = 0, \nonumber \\
& \implies \b{\beta} = (\b{U}^\top \b{U})^{-1} \b{U}^\top \b{x}. \label{equation_beta}
\end{align}
It is noteworthy that the Eq. (\ref{equation_beta}) is also the formula of coefficients in linear regression \cite{friedman2001elements} where the input data are the rows of $\b{U}$ and the labels are $\b{x}$; however, our goal here is different. 

\begin{figure}[!t]
\centering
\includegraphics[width=2.2in]{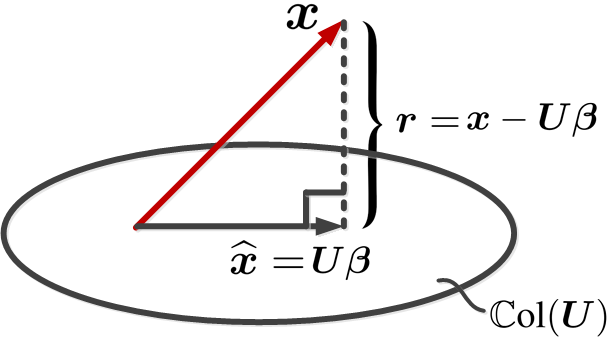}
\caption{The residual and projection onto the column space of $\b{U}$. Credit of image is for \cite{ghojogh2019unsupervised}.}
\label{figure_residual_and_space}
\end{figure}

Plugging Eq. (\ref{equation_beta}) in Eq. (\ref{equation_projection}) gives us:
\begin{align*}
\widehat{\b{x}} = \b{U} (\b{U}^\top \b{U})^{-1} \b{U}^\top \b{x}.
\end{align*}
We define:
\begin{align}\label{equation_hat_matrix}
\mathbb{R}^{d \times d} \ni \b{\Pi} := \b{U} (\b{U}^\top \b{U})^{-1} \b{U}^\top,
\end{align}
as ``projection matrix'' because it projects $\b{x}$ onto $\mathbb{C}\text{ol}(\b{U})$ (and reconstructs back).
Note that $\b{\Pi}$ is also referred to as the ``hat matrix'' in the literature because it puts a hat on top of $\b{x}$.

If the vectors $\{\b{u}_1, \dots, \b{u}_p\}$ are orthonormal (the matrix $\b{U}$ is orthogonal), we have $\b{U}^\top = \b{U}^{-1}$ and thus $\b{U}^\top \b{U} = \b{I}$. Therefore, Eq. (\ref{equation_hat_matrix}) is simplified:
\begin{align}
& \b{\Pi} = \b{U} \b{U}^\top.
\end{align}
So, we have:
\begin{align}\label{equation_x_hat}
\widehat{\b{x}} = \b{\Pi}\, \b{x} = \b{U} \b{U}^\top \b{x}.
\end{align}

\subsubsection{Projection onto a Subspace}\label{section_projection_subspace}

In subspace learning, the projection of a vector $\b{x} \in \mathbb{R}^d$ onto the column space of $\b{U} \in \mathbb{R}^{d \times p}$ (a $p$-dimensional subspace spanned by $\{\b{u}_j\}_{j=1}^p$ where $\b{u}_j \in \mathbb{R}^d$) is defined as:
\begin{align}
&\mathbb{R}^{p} \ni \widetilde{\b{x}} := \b{U}^\top \b{x}, \label{equation_projection_training_onePoint_severalDirections} \\
&\mathbb{R}^{d} \ni \widehat{\b{x}} := \b{U}\b{U}^\top \b{x} = \b{U} \widetilde{\b{x}}, \label{equation_reconstruction_training_onePoint_severalDirections}
\end{align}
where $\widetilde{\b{x}}$ and $\widehat{\b{x}}$ denote the projection and reconstruction of $\b{x}$, respectively.

If we have $n$ data points, $\{\b{x}_i\}_{i=1}^n$, which can be stored column-wise in a matrix $\b{X} \in \mathbb{R}^{d \times n}$, the projection and reconstruction of $\b{X}$ are defined as:
\begin{align}
&\mathbb{R}^{p \times n} \ni \widetilde{\b{X}} := \b{U}^\top \b{X}, \label{equation_projection_training_SeveralPoints_severalDirections} \\
&\mathbb{R}^{d \times n} \ni \widehat{\b{X}} := \b{U}\b{U}^\top \b{X} = \b{U} \widetilde{\b{X}}, \label{equation_reconstruction_training_SeveralPoints_severalDirections}
\end{align}
respectively.

If we have an out-of-sample data point $\b{x}_t$ which was not used in calculation of $\b{U}$, the projection and reconstruction of it are defined as:
\begin{align}
&\mathbb{R}^{p} \ni \widetilde{\b{x}}_t := \b{U}^\top \b{x}_t, \label{equation_projection_outOfSample_onePoint_severalDirections} \\
&\mathbb{R}^{d} \ni \widehat{\b{x}}_t := \b{U}\b{U}^\top \b{x}_t = \b{U} \widetilde{\b{x}}_t, \label{equation_reconstruction_outOfSample_onePoint_severalDirections}
\end{align}
respectively.

In case we have $n_t$ out-of-sample data points, $\{\b{x}_{t,i}\}_{i=1}^{n_t}$, which can be stored column-wise in a matrix $\b{X}_t \in \mathbb{R}^{d \times n_t}$, the projection and reconstruction of $\b{X}_t$ are defined as:
\begin{align}
&\mathbb{R}^{p \times n_t} \ni \widetilde{\b{X}}_t := \b{U}^\top \b{X}_t, \label{equation_projection_outOfSample_SeveralPoints_severalDirections} \\
&\mathbb{R}^{d \times n_t} \ni \widehat{\b{X}}_t := \b{U}\b{U}^\top \b{X}_t = \b{U} \widetilde{\b{X}}_t, \label{equation_reconstruction_outOfSample_SeveralPoints_severalDirections}
\end{align}
respectively.

For the properties of the projection matrix $\b{U}$, refer to \cite{ghojogh2019unsupervised}.

\subsection{Locality Preserving Projection}

LPP assumes that the embeddings of data points are obtained by a linear projection. It approximates Laplacian eigenmap linearly. We want to find the embeddings of data points $\b{X} = [\b{x}_1, \dots, \b{x}_n] \in \mathbb{R}^{d \times n}$ using linear projection. 

\subsubsection{One-dimensional Subspace}

For derivation of LPP optimization, we assume $p=1$ so the projection is onto a line with projection vector $\b{u} \in \mathbb{R}^d$. Later, we will extend to higher values for $p$. In this case, the vector of projections of all points is:
\begin{align}\label{equation_LPP_projection_to_line}
\mathbb{R}^{1 \times n} \ni \b{Y} = [y_1, y_2, \dots, y_n] := \b{u}^\top \b{X}.
\end{align}
Hence, the one-dimensional embedding of every point is:
\begin{align}\label{equation_LPP_projection_to_line_2}
\mathbb{R} \ni y_i = \b{u}^\top \b{x}_i, \quad \forall i \in \{1, \dots, n\}.
\end{align}


Inspired by Eq. (\ref{equation_optimization_Laplacian_eigenmap_1}) for Laplacian eigenmap, the optimization for obtaining the one-dimensional embedding is:
\begin{align}\label{equation_optimization_LPP_1}
\underset{\b{Y}}{\text{minimize}} \quad \sum_{i=1}^n \sum_{j=1}^n w_{ij}\, (y_i - y_j)^2,
\end{align}
for the same interpretation that we had in Laplacian eigenmap. 

\begin{proposition}[\cite{he2004locality}]\label{proposition_LPP_y_L_y}
We have:
\begin{align}
\frac{1}{2} \sum_{i=1}^n \sum_{j=1}^n w_{ij} (y_i - y_j)^2 = \b{u}^\top \b{X} \b{L} \b{X}^\top \b{u}.
\end{align}
\end{proposition}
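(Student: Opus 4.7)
The plan is to reduce this statement to Proposition \ref{proposition_spectral_clustering_f_L_f}, which already establishes the quadratic-form identity $\frac{1}{2}\sum_{i,j} w_{ij}(f_i-f_j)^2 = \b{f}^\top \b{L} \b{f}$ for an arbitrary vector $\b{f} \in \mathbb{R}^n$. The only new ingredient in the present proposition is that each scalar $y_i$ arises from a linear projection of $\b{x}_i$ onto $\b{u}$, so the work consists essentially of substituting this linear relationship and rearranging matrix products.

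First, I would introduce the column vector $\b{y} := [y_1, \dots, y_n]^\top \in \mathbb{R}^n$. By Eq.~(\ref{equation_LPP_projection_to_line_2}), $y_i = \b{u}^\top \b{x}_i = \b{x}_i^\top \b{u}$, so stacking these entries gives $\b{y} = \b{X}^\top \b{u}$, recalling that the columns of $\b{X}$ are the $\b{x}_i$. This recasts the left-hand side purely in terms of $\b{y}$, at which point Proposition~\ref{proposition_spectral_clustering_f_L_f} applies verbatim (with $\b{f}$ replaced by $\b{y}$) and yields
\begin{align*}
\frac{1}{2}\sum_{i=1}^n \sum_{j=1}^n w_{ij}(y_i - y_j)^2 = \b{y}^\top \b{L} \b{y}.
\end{align*}

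Next, I would substitute $\b{y} = \b{X}^\top \b{u}$ back in and use the transpose identity $(\b{X}^\top \b{u})^\top = \b{u}^\top \b{X}$ to obtain
\begin{align*}
\b{y}^\top \b{L} \b{y} = (\b{X}^\top \b{u})^\top \b{L} (\b{X}^\top \b{u}) = \b{u}^\top \b{X} \b{L} \b{X}^\top \b{u},
\end{align*}
which is precisely the right-hand side of the claim.

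There is no real obstacle here: the argument is a one-line substitution followed by a transpose manipulation, and all algebraic lemmas needed (including symmetry of $w_{ij}$ and the definition of $\b{L}$) have already been absorbed into Proposition~\ref{proposition_spectral_clustering_f_L_f}. The only thing to keep in mind is the orientation of $\b{Y}$ in Eq.~(\ref{equation_LPP_projection_to_line}): there it was written as a $1 \times n$ row vector $\b{u}^\top \b{X}$, whereas for the quadratic-form identity one needs to work with its transpose $\b{y} = \b{X}^\top \b{u}$ as a column vector. Making this identification explicit at the outset avoids any confusion in the dimensional bookkeeping.
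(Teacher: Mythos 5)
Your proposal is correct and is in substance the same argument the paper gives: the paper's proof opens by noting it is ``very similar to the proof of Proposition \ref{proposition_spectral_clustering_f_L_f}'' and then re-runs that expansion with $\b{u}^\top \b{x}_i$ playing the role of $f_i$, which is exactly the substitution $\b{y} = \b{X}^\top \b{u}$ that you make explicit. Your packaging---invoking Proposition \ref{proposition_spectral_clustering_f_L_f} as a quadratic-form identity valid for an arbitrary vector and then unwinding $\b{y}^\top \b{L}\, \b{y} = \b{u}^\top \b{X} \b{L} \b{X}^\top \b{u}$---is the cleaner route, and it is legitimate because the proof of that proposition never uses the specific ratio-cut form of $\b{f}$.
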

\begin{proof}
The proof is very similar to the proof of Proposition \ref{proposition_spectral_clustering_f_L_f}. We have:
\begin{align*}
&\b{u}^\top \b{X} \b{L} \b{X}^\top \b{u} \overset{(\ref{equation_Laplacian_matrix})}{=} \b{u}^\top \b{X} (\b{D} - \b{W}) \b{X}^\top \b{u} \\
&= \b{u}^\top \b{X} \b{D} \b{X}^\top \b{u} - \b{u}^\top \b{X} \b{W} \b{X}^\top \b{u} \\
&\overset{(a)}{=} \sum_{i=1}^n \b{u}^\top \b{x}_i\, d_i\, \b{x}_i^\top \b{u} - \sum_{i=1}^n \sum_{j=1}^n \b{u}^\top \b{x}_i\, w_{ij}\, \b{x}_j^\top \b{u} \\
&= \frac{1}{2} \Big(2\sum_{i=1}^n \b{u}^\top \b{x}_i\, d_i\, \b{x}_i^\top \b{u} - 2\sum_{i=1}^n \sum_{j=1}^n \b{u}^\top \b{x}_i\, w_{ij}\, \b{x}_j^\top \b{u}\Big) \\
&\overset{(b)}{=} \frac{1}{2} \Big(2\sum_{i=1}^n (\b{u}^\top \b{x}_i)^2 d_i - 2\sum_{i=1}^n \sum_{j=1}^n (\b{u}^\top \b{x}_i) (\b{u}^\top \b{x}_j) w_{ij}\Big) \\
&\overset{(c)}{=} \frac{1}{2} \Big(\sum_{i=1}^n (\b{u}^\top \b{x}_i)^2 d_i + \sum_{j=1}^n (\b{u}^\top \b{x}_j)^2 d_j \\
&~~~~~~~~~~~~~~~~~~~~~~~~~~~ - 2\sum_{i=1}^n \sum_{j=1}^n (\b{u}^\top \b{x}_i) (\b{u}^\top \b{x}_j) w_{ij}\Big) 
\end{align*}
\begin{align*}
&\overset{(\ref{equation_degree_matrix})}{=} \frac{1}{2} \Big(\sum_{i=1}^n (\b{u}^\top \b{x}_i)^2 \sum_{j=1}^n w_{ij} + \sum_{j=1}^n (\b{u}^\top \b{x}_j)^2 \sum_{i=1}^n w_{ji} \\
&~~~~~~~~~~~~~~~~~~~~~~~~~~~ - 2\sum_{i=1}^n \sum_{j=1}^n (\b{u}^\top \b{x}_i) (\b{u}^\top \b{x}_j) w_{ij} \Big) \\
&\overset{(d)}{=} \frac{1}{2} \Big(\sum_{i=1}^n (\b{u}^\top \b{x}_i)^2 \sum_{j=1}^n w_{ij} + \sum_{j=1}^n (\b{u}^\top \b{x}_j)^2 \sum_{i=1}^n w_{ij} \\
&~~~~~~~~~~~~~~~~~~~~~~~~~~~ - 2\sum_{i=1}^n \sum_{j=1}^n (\b{u}^\top \b{x}_i) (\b{u}^\top \b{x}_j) w_{ij} \Big) \\
&= \frac{1}{2} \sum_{i=1}^n \sum_{j=1}^n w_{ij} (\b{u}^\top \b{x}_i - \b{u}^\top \b{x}_j)^2 \\
&\overset{(\ref{equation_LPP_projection_to_line_2})}{=} \frac{1}{2} \sum_{i=1}^n \sum_{j=1}^n w_{ij} (y_i - y_j)^2,
\end{align*}
where $(a)$ notices that $\b{D}$ is a diagonal matrix, $(b)$ is because $\b{u}^\top \b{x}_i = \b{x}_i^\top \b{u}_i \in \mathbb{R}$, $(c)$ is because we can replace the dummy index $i$ with $j$, and $(d)$ notices that $w_{ij} = w_{ji}$ as we usually use symmetric similarity measures. Q.E.D.
\end{proof}

According to Proposition \ref{proposition_LPP_y_L_y}, we restate Eq. (\ref{equation_optimization_LPP_1}) with adding a constraint inspired by Eq. (\ref{equation_optimization_Laplacian_eigenmap_3}) as:
\begin{equation}\label{equation_optimization_LPP_2}
\begin{aligned}
& \underset{\b{u}}{\text{minimize}}
& & \b{u}^\top \b{X} \b{L} \b{X}^\top \b{u} \\
& \text{subject to}
& & 
\b{u}^\top \b{X} \b{D} \b{X}^\top \b{u} = 1,
\end{aligned}
\end{equation}
where $\b{L}$ and $\b{D}$ are the Laplacian and degree matrix of the adjacency matrix, respectively. Note that the objective variable has been changed to the projection vector $\b{u}$.

The Lagrangian for this optimization problem is \cite{boyd2004convex}:
\begin{align*}
\mathcal{L} = \b{u}^\top \b{X} \b{L} \b{X}^\top \b{u} - \lambda^\top (\b{u}^\top \b{X} \b{D} \b{X}^\top \b{u} - 1),
\end{align*}
where $\lambda$ is the dual variable. Setting the derivative of Lagrangian to zero gives:
\begin{align}
&\mathbb{R}^{d} \ni \frac{\partial \mathcal{L}}{\partial \b{u}} = 2 \b{X} \b{L} \b{X}^\top \b{u} - 2 \lambda \b{X} \b{D} \b{X}^\top \b{u} \overset{\text{set}}{=} \b{0} \nonumber \\
&~~~~~~~~~~~ \implies \b{X} \b{L} \b{X}^\top \b{u} = \lambda \b{X} \b{D} \b{X}^\top \b{u},
\end{align}
which is the generalized eigenvalue problem $(\b{X} \b{L} \b{X}^\top, \b{X} \b{D} \b{X}^\top)$ where $\b{u}$ is the eigenvector \cite{ghojogh2019eigenvalue}. The embedded points are obtained using Eq. (\ref{equation_LPP_projection_to_line}).

\subsubsection{Multi-dimensional Subspace}

Now, we extend LPP to linear projection onto a span of $p$ basis vectors. In other words, we extend it to have $p$-dimensional subspace.
The embedding of $\b{x}_i$ is denoted by $\b{y}_i \in \mathbb{R}^p$. In LPP, we put the embedding vectors column-wise in matrix $\b{Y} = [\b{y}_1, \b{y}_2, \dots, \b{y}_n] \in \mathbb{R}^{p \times n}$.
In this case, the projection is:
\begin{align}\label{equation_LPP_projection_to_subspace}
\mathbb{R}^{p \times n} \ni \b{Y} := \b{U}^\top \b{X},
\end{align}
where $\b{U} = [\b{u}_1, \dots, \b{u}_p] \in \mathbb{R}^{d \times p}$ is the projection matrix onto its column space. 

The Eq. (\ref{equation_optimization_Laplacian_eigenmap_2}) can be extended to multi-dimensional subspace as:
\begin{equation}\label{equation_optimization_Laplacian_eigenmap_2_multi}
\begin{aligned}
& \underset{\b{U}}{\text{minimize}}
& & \textbf{tr}(\b{U}^\top \b{X} \b{L} \b{X}^\top \b{U}) \\
& \text{subject to}
& & 
\b{U}^\top \b{X} \b{D} \b{X}^\top \b{U} = \b{I}.
\end{aligned}
\end{equation}
In optimization, it is assumed that the dimensionality of $\b{U}$ is $d \times d$ but we will truncate it to $d \times p$ after solving the optimization. 

The Lagrangian for this optimization problem is \cite{boyd2004convex}:
\begin{align*}
\mathcal{L} = \textbf{tr}(\b{U}^\top \b{X} \b{L} \b{X}^\top \b{U}) - \textbf{tr}(\b{\Lambda}^\top (\b{U}^\top \b{X} \b{D} \b{X}^\top \b{U} - \b{I})),
\end{align*}
where $\b{\Lambda}$ is the diagonal matrix with the dual variables. Setting the derivative of Lagrangian to zero gives:
\begin{align}
&\mathbb{R}^{d \times d} \ni \frac{\partial \mathcal{L}}{\partial \b{D}} = 2 \b{X} \b{L} \b{X}^\top \b{U} - 2 \b{X} \b{D} \b{X}^\top \b{U} \b{\Lambda} \overset{\text{set}}{=} \b{0} \nonumber \\
&~~~~~~~~~~~ \implies \b{X} \b{L} \b{X}^\top \b{U} = \b{X} \b{D} \b{X}^\top \b{U} \b{\Lambda}, \label{equation_LPP_multi_solution}
\end{align}
which is the generalized eigenvalue problem $(\b{X} \b{L} \b{X}^\top, \b{X} \b{D} \b{X}^\top)$ where columns of $\b{U}$ are the eigenvectors \cite{ghojogh2019eigenvalue}. 
As it is a minimization problem, the columns of $\b{U}$, which are the eigenvectors, are sorted from the corresponding smallest to largest eigenvalues. Moreover, as we have Laplacian matrix, we have one eigenvector with eigenvalue zero which should be ignored (see Section \ref{section_eig_Laplacian}). 
The embedded points are obtained using Eq. (\ref{equation_LPP_projection_to_subspace}).

\subsection{Kernel Locality Preserving Projection}

Kernel LPP, also called kernel supervised LPP,  \cite{cheng2005supervised,li2008kernel} performs LPP in the feature space. 
Assume $\b{\phi}(.)$ denotes the pulling function to the feature space. 
According to representation theory \cite{alperin1993local}, any pulled solution (direction) $\b{\phi}(\b{u}) \in \mathcal{H}$ must lie in the span of all the training vectors pulled to $\mathcal{H}$, i.e., $\b{\Phi}(\b{X}) = [\b{\phi}(\b{x}_1), \dots, \b{\phi}(\b{x}_n)] \in \mathbb{R}^{t\times n}$. 
Hence, $\mathbb{R}^{t} \ni \b{\phi}(\b{u}) = \sum_{i=1}^n \theta_i\, \b{\phi}(\b{x}_i) = \b{\Phi}(\b{X})\, \b{\theta}$ where $\mathbb{R}^n \ni \b{\theta} = [\theta_1, \dots, \theta_n]^\top$ is the unknown vector of coefficients.
Considering multiple projection directions, we have:
\begin{align}\label{equation_Phi_U}
\mathbb{R}^{t \times p} \ni \b{\Phi}(\b{U}) = \b{\Phi}(\b{X})\, \b{\Theta},
\end{align}
where $\mathbb{R}^{n \times p} \ni \b{\Theta} = [\b{\theta}_1, \dots, \b{\theta}_p]$.
Therefore, the objective function in Eq. (\ref{equation_optimization_Laplacian_eigenmap_2_multi}) is converted to:
\begin{align*}
& \textbf{tr}(\b{\Phi}(\b{U})^\top \b{\Phi}(\b{X}) \b{L} \b{\Phi}(\b{X})^\top \b{\Phi}(\b{U})) \\
& \overset{(\ref{equation_Phi_U})}{=} \textbf{tr}(\b{\Theta}^\top \b{\Phi}(\b{X})^\top \b{\Phi}(\b{X}) \b{L} \b{\Phi}(\b{X})^\top \b{\Phi}(\b{X})\, \b{\Theta}) \\
&= \textbf{tr}(\b{\Theta}^\top \b{K}_x \b{L} \b{K}_x\, \b{\Theta}),
\end{align*}
where:
\begin{align}\label{equation_kernel_of_data}
\mathbb{R}^{n \times n} \ni \b{K}_x := \b{\Phi}(\b{X})^\top \b{\Phi}(\b{X}),
\end{align}
is the kernel of data \cite{hofmann2008kernel}. 
Similarly, the constraint of Eq. (\ref{equation_optimization_Laplacian_eigenmap_2_multi}) is converted to $\b{\Theta}^\top \b{K}_x \b{D} \b{K}_x\, \b{\Theta}$.
Finally, the Eq. (\ref{equation_optimization_Laplacian_eigenmap_2_multi}) is changed to:
\begin{equation}\label{equation_optimization_LPP_kernel}
\begin{aligned}
& \underset{\b{\b{\Theta}}}{\text{minimize}}
& & \textbf{tr}(\b{\Theta}^\top \b{K}_x \b{L} \b{K}_x\, \b{\Theta}) \\
& \text{subject to}
& & 
\b{\Theta}^\top \b{K}_x \b{D} \b{K}_x\, \b{\Theta} = \b{I},
\end{aligned}
\end{equation}
where $\b{\Theta}$ is the optimization variable. 
The solution to this problem is similar to the solution of Eq. (\ref{equation_optimization_Laplacian_eigenmap_2_multi}). So, its solution is the generalized eigenvalue problem $(\b{K}_x \b{L} \b{K}_x^\top, \b{K}_x \b{D} \b{K}_x^\top)$ where columns of $\b{\Theta}$ are the eigenvectors \cite{ghojogh2019eigenvalue}. 
Sorting the matrix $\b{\Theta}$ and ignoring the eigenvector with zero eigenvalue (see Section \ref{section_eig_Laplacian}) is done as explained before. The $p$ smallest eigenvectors form $\b{\Theta} \in \mathbb{R}^{n \times p}$. 
The embedded data are obtained as:
\begin{align}
\mathbb{R}^{p \times n} \ni \b{Y} &\overset{(\ref{equation_LPP_projection_to_subspace})}{=} \b{\Phi}(\b{U})^\top \b{\Phi}(\b{X}) \overset{(\ref{equation_Phi_U})}{=} \b{\Theta}^\top \b{\Phi}(\b{X})^\top \b{\Phi}(\b{X}) \nonumber \\
&\overset{(\ref{equation_kernel_of_data})}{=} \b{\Theta}^\top \b{K}_x. \label{equation_kernel_LPP_embedding}
\end{align}

\subsection{Other Improvements over Locality Preserving Projection}

There have been many improvements over the basic LPP. Some examples of these developments are supervised LPP \cite{wong2012supervised}, extended LPP \cite{shikkenawis2012improving}, multiview uncorrelated LPP \cite{yin2019multiview}, graph-optimized LPP \cite{zhang2010graph}, and LPP for Grassmann manifold \cite{wang2017locality}. 
For image recognition purposes, a two-dimensional (2D) LPP, \cite{chen20072d} is proposed whose robust version is \cite{chen20192drlpp} and some of its applications are \cite{hu2007two,xu2009one}.
A survey on LPP variants can be found in \cite{shikkenawis2016some}. 
LPP has been shown to be effective for different applications such as document clustering \cite{cai2005document}, face recognition \cite{he2003learning,yu2006face,yang2017discriminant}, and speech recognition \cite{tang2008study}.

\section{Graph Embedding}\label{section_graph_embedding}

Graph Embedding (GE) \cite{yan2005graph,yan2006graph} is a generalized dimensionality reduction method where the graph of data is embedded in the lower dimensional space \cite{chang2014graph}. 
Note that graph embedding is referred to as two different families of methods in the literature. In the first sense, including GE \cite{yan2005graph,yan2006graph}, it refers to embedding the graph of data in the lower dimensional space. The second sense, however, refers to embedding every node of data graph in a subspace \cite{goyal2018graph}. Here, our focus is on embedding the whole data graph. 
Note that there exist some other generalized subspace learning methods, such as Roweis discriminant analysis \cite{ghojogh2020generalized}, in the literature.

\subsection{Direct Graph Embedding}

Consider the high dimensional data points $\{\b{x}_i \in \mathbb{R}^d\}_{i=1}^n$. We want to reduce the dimensionality of data from $d$ to $p \leq d$ to have $\{\b{y}_i \in \mathbb{R}^p\}_{i=1}^n$, where usually $p \ll d$.
We denote $\b{X} := [\b{x}_1, \dots, \b{x}_n] \in \mathbb{R}^{d \times n}$ and $\b{Y} := [\b{y}_1, \dots, \b{y}_n] \in \mathbb{R}^{p \times n}$.
Similar to spectral clustering, we create an adjacency matrix using Eq. (\ref{equation_adjacency_matrix}). The elements $w_{ij}$ can be determined by the RBF kernel, i.e. Eq. (\ref{equation_RBF_kernel}), or the simple-minded approach, i.e. Eq. (\ref{equation_simple_minded}).

The objective function of optimization in direct GE \cite{yan2005graph,yan2006graph} is inspired by Eq. (\ref{equation_minimization_spectral_clustering_w_f}) or Eq. (\ref{equation_spectral_clustering_optimization_multiple_2}) in spectral clustering:
\begin{equation}\label{equation_optimization_direct_graph_embedding}
\begin{aligned}
& \underset{\b{Y}}{\text{minimize}}
& & \sum_{i=1}^n \sum_{j=1}^n w_{ij}\, \|\b{y}_i - \b{y}_j\|_2^2 \\
& \text{subject to}
& & 
\b{Y}^\top \b{B} \b{Y} = \b{I},
\end{aligned}
\end{equation}
where $\b{B} \succeq 0$ is called the constraint matrix. 
According to Proposition \ref{proposition_spectral_clustering_f_L_f}, this optimization problem is equivalent to:
\begin{equation}\label{equation_optimization_direct_graph_embedding_2}
\begin{aligned}
& \underset{\b{Y}}{\text{minimize}}
& & \b{Y}^\top \b{L} \b{Y} \\
& \text{subject to}
& & 
\b{Y}^\top \b{B} \b{Y} = \b{I},
\end{aligned}
\end{equation}
where $\b{L}$ is the Laplacian of graph of data. 

The Lagrangian for this optimization problem is \cite{boyd2004convex}:
\begin{align*}
\mathcal{L} = \b{Y}^\top \b{L} \b{Y} - \textbf{tr}(\b{\Lambda}^\top (\b{Y}^\top \b{B} \b{Y} - \b{I})),
\end{align*}
where $\b{\Lambda}$ is the diagonal matrix with the dual variables. Setting the derivative of Lagrangian to zero gives:
\begin{align}
&\mathbb{R}^{n \times n} \ni \frac{\partial \mathcal{L}}{\partial \b{Y}} = 2 \b{L} \b{Y} - 2 \b{B} \b{Y} \b{\Lambda} \overset{\text{set}}{=} \b{0} \nonumber \\
&~~~~~~~~~~~ \implies \b{L} \b{Y} = \b{B} \b{Y} \b{\Lambda},
\end{align}
which is the generalized eigenvalue problem $(\b{L}, \b{B})$ where columns of $\b{Y}$ are the eigenvectors \cite{ghojogh2019eigenvalue}. 
Sorting the matrix $\b{Y}$ and ignoring the eigenvector with zero eigenvalue (see Section \ref{section_eig_Laplacian}) is done as explained before. 

\subsection{Linearized Graph Embedding}

As we had for LPP, linearized GE \cite{yan2005graph,yan2006graph} assumes that the embedded data can be obtained by a linear projection onto the column space of a projection matrix $\b{U} \in \mathbb{R}^{d \times p}$. Hence, linearized GE uses similar optimization to Eq. (\ref{equation_optimization_Laplacian_eigenmap_2_multi}):
\begin{equation}\label{equation_optimization_linearized_graph_embedding}
\begin{aligned}
& \underset{\b{U}}{\text{minimize}}
& & \textbf{tr}(\b{U}^\top \b{X} \b{L} \b{X}^\top \b{U}) \\
& \text{subject to}
& & 
\b{U}^\top \b{X} \b{B} \b{X}^\top \b{U} = \b{I},
\end{aligned}
\end{equation}
whose solution is the generalized eigenvalue problem $(\b{X} \b{L} \b{X}^\top, \b{X} \b{B} \b{X}^\top)$, similar to Eq. (\ref{equation_LPP_multi_solution}).

Another approach for linearized GE is a slight revision to Eq. (\ref{equation_optimization_linearized_graph_embedding}) which is \cite{yan2005graph}:
\begin{equation}\label{equation_optimization_linearized_graph_embedding2}
\begin{aligned}
& \underset{\b{U}}{\text{minimize}}
& & \textbf{tr}(\b{U}^\top \b{X} \b{L} \b{X}^\top \b{U}) \\
& \text{subject to}
& & 
\b{U}^\top \b{U} = \b{I},
\end{aligned}
\end{equation}
whose solution is the eigenvalue decomposition of $\b{X} \b{L} \b{X}^\top$. 

\subsection{Kernelized Graph Embedding}

Kernelized GE \cite{yan2005graph,yan2006graph} performs the linearized GE in the feature space. It uses a similar optimization problem as Eq. (\ref{equation_optimization_LPP_kernel}):
\begin{equation}\label{equation_optimization_kernelized_GE}
\begin{aligned}
& \underset{\b{\b{\Theta}}}{\text{minimize}}
& & \textbf{tr}(\b{\Theta}^\top \b{K}_x \b{L} \b{K}_x\, \b{\Theta}) \\
& \text{subject to}
& & 
\b{\Theta}^\top \b{K}_x \b{B} \b{K}_x\, \b{\Theta} = \b{I},
\end{aligned}
\end{equation}
whose solution is the generalized eigenvalue problem $(\b{K}_x \b{L} \b{K}_x^\top, \b{K}_x \b{B} \b{K}_x^\top)$ where columns of $\b{\Theta}$ are the eigenvectors \cite{ghojogh2019eigenvalue}.
Sorting the matrix $\b{\Theta}$ and ignoring the eigenvector with zero eigenvalue (see Section \ref{section_eig_Laplacian}) is done as explained before. The embedding is then obtained using Eq. (\ref{equation_kernel_LPP_embedding}). 

Another approach for linearized GE is a slight revision to Eq. (\ref{equation_optimization_kernelized_GE}). 
This approach is the kernelized version of Eq. (\ref{equation_optimization_linearized_graph_embedding2}) and is \cite{yan2005graph}:
\begin{equation}\label{equation_optimization_kernelized_GE_2}
\begin{aligned}
& \underset{\b{\b{\Theta}}}{\text{minimize}}
& & \textbf{tr}(\b{\Theta}^\top \b{K}_x \b{L} \b{K}_x\, \b{\Theta}) \\
& \text{subject to}
& & 
\b{\Theta}^\top \b{K}_x \b{\Theta} = \b{I},
\end{aligned}
\end{equation}
whose solution is the generalized eigenvalue problem $(\b{K}_x \b{L} \b{K}_x^\top, \b{K}_x)$.

\subsection{Special Cases of Graph Embedding}

GE is a generalized subspace learning method whose special cases are some of the well-known dimensionality reduction methods. These special cases are cases of either direct, or linearized, or kernelized GE. Some of the special cases are Laplacian eigenmap, LPP, kernel LPP, Principal Component Analysis (PCA), kernel PCA, Fisher Discriminant Analysis (FDA), kernel FDA, Multidimensional Scaling (MDS), Isomap, and Locally Linear Embedding (LLE). 

\subsubsection{Laplacian Eigenmap}

Comparing Eq. (\ref{equation_optimization_Laplacian_eigenmap_2}) of Laplacian eigenmap with Eq. (\ref{equation_optimization_direct_graph_embedding_2}), we see that $\b{B} = \b{I}$. Also comparing Eq. (\ref{equation_optimization_Laplacian_eigenmap_3}) as the second approach of Laplacian eigenmap, with Eq. (\ref{equation_optimization_direct_graph_embedding_2}) shows that $\b{B} = \b{D}$. Hence, Laplacian eigenmap is a special case of direct GE.

\subsubsection{LPP and Kernel LPP}

Comparing Eq. (\ref{equation_optimization_Laplacian_eigenmap_2_multi}) with Eq. (\ref{equation_optimization_linearized_graph_embedding}) and also comparing Eq. (\ref{equation_optimization_LPP_kernel}) with Eq. (\ref{equation_optimization_kernelized_GE}) show that LPP and kernel LPP are special cases of linearized GE and kernelized GE, respectively. 

\subsubsection{PCA and Kernel PCA}

The optimization of PCA is \cite{ghojogh2019unsupervised}:
\begin{equation}\label{equation_PCA}
\begin{aligned}
& \underset{\b{U}}{\text{minimize}}
& & \textbf{tr}(\b{U}^\top \b{S} \b{U}) \overset{(a)}{=} \textbf{tr}(\b{U}^\top \b{X} \b{X}^\top \b{U}) \\
& \text{subject to}
& & 
\b{U}^\top \b{U} = \b{I},
\end{aligned}
\end{equation}
where $\b{U}$ is the projection matrix, $\b{S}$ is the covariance matrix of data, and $(a)$ is because of $\b{S} = \b{X} \b{X}^\top$ assuming that data $\b{X}$ are already centered. 
Moreover, the solution of kernel PCA is the eigenvalue decomposition of kernel of data \cite{ghojogh2019unsupervised}; hence, optimization of kernel PCA is the kernelized version of Eq. (\ref{equation_PCA}). 
Comparing Eq. (\ref{equation_PCA}) with Eq. (\ref{equation_optimization_linearized_graph_embedding2}) shows that PCA is a special case of linearized GE with $\b{L}=\b{I}$. Moreover, kernel PCA is a special case of kernelized GE with $\b{L} = \b{I}$. 

\subsubsection{FDA and Kernel FDA}

The optimization of FDA is \cite{ghojogh2019fisher,fisher1936use}:
\begin{equation}\label{equation_optimization_FDA_with_S_B}
\begin{aligned}
& \underset{\b{U}}{\text{maximize}}
& & \textbf{tr}(\b{U}^\top \b{S}_B\, \b{U}) \\
& \text{subject to}
& & \b{U}^\top \b{S}_W\, \b{U} = \b{I},
\end{aligned}
\end{equation}
where $\b{S}_B$ and $\b{S}_W$ are the between- and within-class scatters, respectively. 
We also have \cite{ye2007least}: 
\begin{align}\label{equation_S_T_as_sum_of_scatters}
\b{S} = \b{S}_B + \b{S}_W \implies \b{S}_B = \b{S} - \b{S}_W,
\end{align}
where $\b{S}$ denotes the total covariance of data. Hence, the Fisher criterion, to be maximized, can be restated as:
\begin{align}\label{equation_Fisher_criterion}
& \frac{\textbf{tr}(\b{U}^\top \b{S}_B\, \b{U})}{\textbf{tr}(\b{U}^\top \b{S}_W\, \b{U})} = \frac{\textbf{tr}(\b{U}^\top \b{S}\, \b{U})}{\textbf{tr}(\b{U}^\top \b{S}_W\, \b{U})} - 1,
\end{align}
whose second constant term can be ignored in maximization. Therefore, Eq. (\ref{equation_optimization_FDA_with_S_B}) is restated to \cite{yan2005graph,ghojogh2020generalized}:
\begin{equation}\label{equation_optimization_FDA_with_S_T}
\begin{aligned}
& \underset{\b{U}}{\text{maximize}}
& & \textbf{tr}(\b{U}^\top \b{S}\, \b{U}) \overset{(a)}{=} \textbf{tr}(\b{U}^\top \b{X} \b{X}^\top \b{U}) \\
& \text{subject to}
& & \b{U}^\top \b{S}_W\, \b{U} = \b{I},
\end{aligned}
\end{equation}
where $(a)$ is because of $\b{S} = \b{X} \b{X}^\top$ assuming that data $\b{X}$ are already centered. 

Let $c_i$ denote the class to which the point $\b{x}_i$ belongs and $\b{\mu}_{c_i}$ be the mean of class $c_i$. Let $c$ denote the number of classes and $n_j$ be the sample size of class $j$. For every class $j$, we define $\b{e}_j = [e_{j1}, \dots, e_{jn}]^\top \in \mathbb{R}^n$ as:
\begin{align}
e_{ji} = 
\left\{
    \begin{array}{ll}
        1 & \mbox{if } c_i = j, \\
        0 & \mbox{otherwise}.
    \end{array}
\right.
\end{align}
Note that the within scatter can be obtained as \cite{yan2005graph}:
\begin{align}\label{equation_within_scatter_restate_for_GE}
\b{S}_W &= \sum_{i=1}^n (\b{x}_i - \b{\mu}_{c_i}) (\b{x}_i - \b{\mu}_{c_i})^\top \nonumber \\
&= \b{X} (\b{I} - \sum_{j=1}^{c} \frac{1}{n_j} \b{e}_j \b{e}_j^\top) \b{X}^\top.
\end{align}
Comparing Eq. (\ref{equation_optimization_FDA_with_S_T}) with Eq. (\ref{equation_optimization_linearized_graph_embedding}) and noticing Eq. (\ref{equation_within_scatter_restate_for_GE}) show that FDA is a special case of linearized GE with $\b{L} = \b{I}$ and $\b{B} = \b{I} - \sum_{j=1}^{c} (1/n_j) \b{e}_j \b{e}_j^\top$. 
Kernel FDA \cite{ghojogh2019fisher,mika1999fisher} is the kernelized version of Eq. (\ref{equation_optimization_FDA_with_S_T}) so it is a special case of kernelized GE with the mentioned $\b{L}$ and $\b{B}$ matrices. 

\subsubsection{MDS and Isomap}

Kernel classical MDS uses a kernel as \cite{ghojogh2020multidimensional}:
\begin{align}\label{equation_Isomap_kernel}
\mathbb{R}^{n \times n} \ni \b{K} = -\frac{1}{2} \b{H} \b{D} \b{H},
\end{align}
where $\b{D} \in \mathbb{R}^{n \times n}$ is a matrix with squared Euclidean distance between points and $\mathbb{R}^{n \times n} \ni \b{H} := \b{I} - (1/n) \b{1} \b{1}^\top$ is the centering matrix. 
Isomap also applies multidimensional scaling with a geodesic kernel which uses piece-wise Euclidean distance for computing $\b{D}$ \cite{tenenbaum2000global,ghojogh2020multidimensional}.
The row summation of this kernel matrix is \cite{yan2005graph,yan2006graph}: 
\begin{align*}
&\sum_{j=1}^n \b{K}_{ij} \overset{(\ref{equation_Isomap_kernel})}{=} \sum_{j=1}^n (-\frac{1}{2} \b{H} \b{D} \b{H})_{ij} \\
&= \sum_{j=1}^n \big(-\frac{1}{2} (\b{I} - \frac{1}{n} \b{1} \b{1}^\top) \b{D} (\b{I} - \frac{1}{n} \b{1} \b{1}^\top)\big)_{ij} \\
&= \frac{1}{2} \sum_{j=1}^n \big( -\b{D}_{ij} + \frac{1}{n} \sum_{i'=1}^n \b{D}_{i' j} \\
&~~~~~~~~~~~~~~~~~~~~~~~~~~~~~~ + \frac{1}{n} \sum_{j'=1}^n (\b{D}_{i j'} - \frac{1}{n} \sum_{k'=1}^n \b{D}_{k' j'}) \big) \\
&= (-\frac{1}{2} \sum_{j=1}^n \b{D}_{ij} + \frac{1}{2n} \sum_{j=1}^n \sum_{i'=1}^n \b{D}_{i' j}) + (\frac{1}{2n} \sum_{j=1}^n \sum_{j'=1}^n \b{D}_{i j'} \\
&~~~~~~ - \frac{1}{2n^2} \sum_{j=1}^n \sum_{j'=1}^n \sum_{k'=1}^n \b{D}_{k' j'}) = \b{0} + \b{0} = \b{0}.
\end{align*}
According to Eq. (\ref{equation_Laplacian_row_sum}), the kernel used in kernel classical MDS or the geodesic kernel used in Isomap can be interpreted as the Laplacian matrix of graph of data as it satisfies its row-sum property. 
The optimization of MDS or Isomap is \cite{ghojogh2020multidimensional}:
\begin{equation}\label{equation_optimization_Isomap}
\begin{aligned}
& \underset{\b{Y}}{\text{minimize}}
& & \b{Y}^\top \b{K} \b{Y} \\
& \text{subject to}
& & 
\b{Y}^\top \b{Y} = \b{I},
\end{aligned}
\end{equation}
with kernel of Eq. (\ref{equation_Isomap_kernel}). 
Comparing Eq. (\ref{equation_optimization_Isomap}) with Eq. (\ref{equation_optimization_direct_graph_embedding_2}) shows that kernel classical MDS and Isomap are special cases of direct GE with $\b{L} = \b{K}$ and $\b{B} = \b{I}$. 

\subsubsection{LLE}

Let $\b{W} \in \mathbb{R}^{n \times n}$ be the reconstruction weight matrix in LLE \cite{roweis2000nonlinear,ghojogh2020locally}. Let $\mathbb{R}^{n \times n} \ni \b{M} := (\b{I} - \b{W}) (\b{I} - \b{W})^\top$.
The row summation of matrix $\b{M}$ is \cite{yan2005graph,yan2006graph}:
\begin{align*}
\sum_{j=1}^n \b{M}_{ij} &= \sum_{j=1}^n \big((\b{I} - \b{W}) (\b{I} - \b{W})^\top\big)_{ij} \\
&= \sum_{j=1}^n \big(\b{I}_{ij} - \b{W}_{ij} - \b{W}_{ji} + (\b{W}\b{W}^\top)_{ij}\big) \\
&= \b{1} - \sum_{j=1}^n (\b{W}_{ij} + \b{W}_{ji}) + \sum_{j=1}^n \sum_{k=1}^n (\b{W}_{ij} \b{W}_{jk}) \\
&= \b{1} - \sum_{j=1}^n (\b{W}_{ij} + \b{W}_{ji}) + \sum_{j=1}^n (\b{W}_{ij} \sum_{k=1}^n \b{W}_{jk}) \\
&\overset{(a)}{=} \b{1} - \sum_{j=1}^n \b{W}_{ij} - \sum_{j=1}^n \b{W}_{ji} + \sum_{j=1}^n \b{W}_{ij} = \b{0},
\end{align*}
where $(a)$ is because the row summation of reconstruction matrix is one, i.e. $\sum_{k=1}^n \b{W}_{jk} = \b{1}$, according to the constraint in the linear reconstruction step of LLE (see \cite{ghojogh2020locally} for more details). 
According to Eq. (\ref{equation_Laplacian_row_sum}), the matrix $\b{M}$, used in LLE, can be interpreted as the Laplacian matrix of graph of data as it satisfies its row-sum property. 
The optimization of LLE is \cite{ghojogh2020locally}:
\begin{equation}\label{equation_optimization_LLE}
\begin{aligned}
& \underset{\b{Y}}{\text{minimize}}
& & \textbf{tr}(\b{Y}^\top\b{M}\b{Y}), \\
& \text{subject to}
& & \frac{1}{n} \b{Y}^\top \b{Y} = \b{I}, \\
& & & \b{Y}^\top \b{1} = \b{0},
\end{aligned}
\end{equation}
whose second constraint can be ignored because it is automatically satisfied (see \cite{ghojogh2020locally}). 
Comparing Eq. (\ref{equation_optimization_LLE}), without its second constraint, to Eq. (\ref{equation_optimization_direct_graph_embedding_2}) shows that LLE is a special case of direct GE with $\b{L} = \b{M}$ and $\b{B} = (1/n) \b{I}$.

\subsection{Other Improvements over Graph embedding}

Some of the improved variants of graph embedding are kernel eigenmap for graph embedding with side information \cite{brand2003continuous}, fuzzy graph embedding \cite{wan2017local}, graph embedding with extreme learning machine \cite{yang2019graph}, and deep dynamic graph embedding \cite{goyal2018dyngem,goyal2020dyngraph2vec}. Graph embedding has also been used for domain adaptation \cite{hedegaard2020supervised,hedegaard2020supervised2}. 
A Python package for graph embedding toolbox is available \cite{goyal2018gem}.

\section{Diffusion Map}\label{section_diffusion_map}

Diffusion map, proposed in \cite{Lafon2004diffusion,coifman2005geometric,coifman2006diffusion}, is a nonlinear dimensionality reduction method which makes use of Laplacian of data. 
Note that the literature of diffusion map usually refers to Laplacian as the normalized kernel (see Section \ref{section_Laplacian_matrix_definition} where we explain why Laplacian is sometimes called the normalized kernel). 
Diffusion map reveals the nonlinear underlying manifold of data by a diffusion process, where the structure of manifold gets more obvious by passing some time on running the algorithm. Its algorithm includes several concepts explained in the following. 

\subsection{Discrete Time Markov Chain}

Consider the graph of data, denoted by $\mathcal{X} = \{\b{x}_1, \dots, \b{x}_n\}$ where $\b{x}_i \in \mathbb{R}^d$ is the $i$-th data point in the dataset. Consider a $k$NN graph, or an adjacency matrix, of data. Starting from any point in the graph, we can randomly traverse the data graph to move to another point. This process is known as a random walk on the graph of data \cite{de2008introduction}. Let $\mathbb{P}(\b{x}_i, \b{x}_j)$ denote the probability of moving from point $\b{x}_i$ to $\b{x}_j$. Obviously, if the two points are closer to one another in the graph or adjacency matrix, i.e. if they are more similar, this probability is larger. Movements from a point to another point in this graph can be modeled as a discrete time Markov chain\footnote{A discrete time Markov chain is a sequence of random variables where the value of the next random variable depends only on the value of the current random value and not the previous random variables in the sequence.} \cite{ross2014introduction}. Let $\b{M}$ be the transition matrix of a Markov chain on $\mathcal{X}$ where its elements are the probabilities of transition \cite{ghojogh2019hidden}. We can repeat this transition for any number of times. The transition matrix at the $t$-th step is obtained as $\b{M}^t$ \cite{ross2014introduction}. 

Let $\b{W} \in \mathbb{R}^{n \times n}$ be the adjacency matrix. Diffusion map usually uses the RBF kernel, Eq. (\ref{equation_RBF_kernel}), for the adjacency matrix \cite{nadler2006diffusion,nadler2006diffusion2}. According to Eq. (\ref{equation_Laplacian_matrix_2}), the Laplacian is calculated for a value of $\alpha$:
\begin{align}
\mathbb{R}^{n \times n} \ni \b{L}^{(\alpha)} := \b{D}^{-\alpha} \b{W} \b{D}^{-\alpha}.
\end{align}
Let $\b{D}^{(\alpha)} \in \mathbb{R}^{n \times n}$ denote the diagonal degree matrix for $\b{L}^{(\alpha)}$ which is calculated similarly as Eq. (\ref{equation_degree_matrix}) where $\b{W}$ is replaced by $\b{L}^{(\alpha)}$ in that equation.
Applying the so-called \textit{random-walk graph Laplacian normalization} to the Laplacian matrix gives us \cite{chung1997spectral}:
\begin{align}\label{equation_diffusion_map_transitionMatrix}
\mathbb{R}^{n \times n} \ni \b{M} := (\b{D}^{(\alpha)})^{-1} \b{L}^{(\alpha)}, 
\end{align}
which we take as the transition matrix of our Markov chain here. 
Note that Eq. (\ref{equation_diffusion_map_transitionMatrix}) is a stochastic transition matrix with
non-negative rows which sum to one \cite{brand2003continuous}. 
According to the definition of transition matrix, the $(i,j)$-th element of transition matrix at time step $t$, i.e. $\b{M}^t$, is:
\begin{align}
\mathbb{P}(\b{x}_j, t\, |\, \b{x}_i) = \b{M}^t(i,j).
\end{align}

\subsection{The Optimization Problem}\label{section_diffusion_map_optimization}

At the time step $t$, diffusion map 
calculates the embeddings $\b{Y} \in \mathbb{R}^{n \times d}$ by maximizing $\textbf{tr}(\b{Y}^\top \b{M}^t \b{Y})$ where it assumes the embeddings have scaled identity covariance, i.e. $\b{Y}^\top \b{Y} = \b{I}$. 
The term $\textbf{tr}(\b{Y}^\top \b{M}^t \b{Y})$ shows the transition matrix of data, at time $t$, after projection onto the column space of $\b{Y}$ \cite{ghojogh2019unsupervised}.
The optimization problem is: 
\begin{equation}\label{equation_diffusion_map_optimization}
\begin{aligned}
& \underset{\b{Y}}{\text{maximize}}
& & \textbf{tr}(\b{Y}^\top \b{M}^t \b{Y}) \\
& \text{subject to}
& & 
\b{Y}^\top \b{Y} = \b{I},
\end{aligned}
\end{equation}
whose solution is the eigenvalue problem of $\b{M}^t$:
\begin{align}
\b{M}^t \b{Y} = \b{Y} \b{\Lambda},
\end{align}
where columns of $\b{Y}$ are the eigenvectors \cite{ghojogh2019eigenvalue} and diagonal elements of $\b{\Lambda}$ are the eigenvalues of $\b{M}^t$. 
As it is a maximization problem, the eigenvectors are sorted from the corresponding largest to smallest eigenvalues. 
Then, we truncate the solution by taking the $p$ leading eigenvectors to have the $p$-dimensional embedding $\b{Y} \in \mathbb{R}^{n \times p}$ where $p \leq d$. Note that diffusion map usually scales the embedding using eigenvalues. This will be explained in more detail in Section \ref{section_difusion_distance}.

\subsection{Diffusion Distance}\label{section_difusion_distance}

We can define a diffusion distance measuring how dissimilar two points are with regards to their probability of random walk from their neighbors to them. In other words, this distance, at time step $t$, is defined as \cite{nadler2006diffusion,nadler2006diffusion2}:
\begin{align}
d^{(t)}(\b{x}_i, \b{x}_j) := \sqrt{\sum_{\ell=1}^n \frac{\big(\mathbb{P}(\b{x}_\ell, t | \b{x}_i) - \mathbb{P}(\b{x}_\ell, t | \b{x}_j)\big)^2}{\b{\psi}_1(\ell)}},
\end{align}
where $\b{\psi}_1 = [\b{\psi}_1(1), \dots, \b{\psi}_1(n)] \in \mathbb{R}^n$ is the leading eigenvector with the largest eigenvalue, i.e. $\b{Y} = [\b{\psi}_1, \dots, \b{\psi}_p]$.
The denominator is for the sake of normalization.
It is shown in \cite{nadler2006diffusion2} that this distance is equivalent to the original diffusion distance proposed in \cite{coifman2006diffusion}:
\begin{align}
d^{(t)}(\b{x}_i, \b{x}_j) := \sqrt{\sum_{\ell=1}^n \lambda_\ell^{2t} \big(\b{\psi}_\ell(i) - \b{\psi}_\ell(j)\big)^2},
\end{align}
where $\lambda_\ell$ denotes the $\ell$-th largest eigenvalue. 

If, at time $t$, the diffusion distance of two points is small, it implies that the two points are similar and close to each other in the embedding of step $t$. 
In other words, a small diffusion map between two points means those points probably belong to the same cluster in the embedding. 
Note that, as was explained in Section \ref{section_diffusion_map_optimization}, the diffusion map, i.e. the embedding, is computed as $\b{Y} = [\b{y}_1, \dots, \b{y}_n]^\top \in \mathbb{R}^{n \times p}$ where \cite{coifman2006diffusion}:
\begin{align}
\mathbb{R}^p \ni \b{y}_i^{(t)} := [\lambda_1^t \b{\psi}_1(i), \dots, \lambda_p^t \b{\psi}_p(i)]^\top,
\end{align}
for time step $t$.
Note that diffusion map scales the embedding using the corresponding eigenvalues to the power of time step. Hence, the diffusion map gets larger and larger by going forward in time. That is why the name of this method is diffusion map as it diffuses in time where $t$ is the time step and plays as a parameter for the scale, too. 

Moreover, it is shown in \cite{nadler2006diffusion2} that:
\begin{align}
d^{(t)}(\b{x}_i, \b{x}_j) = \big\|\b{y}_i^{(t)} - \b{y}_j^{(t)}\big\|_2.
\end{align}
Therefore, the diffusion distance is equivalent to the distance of embedded points in the diffusion map.

\subsection{Other Improvements over Diffusion maps}

An improvement over diffusion map is the vector diffusion map \cite{singer2012vector}, which is a generalization of diffusion map and several other nonlinear dimensionality reduction methods. 
Diffusion map has also been used in several applications such as data fusion \cite{lafon2006data}, nonlinear independent component analysis \cite{singer2008non}, and changing data \cite{coifman2014diffusion}. 
The Fokker-Planck operator, which describes the time evolution of a probability density function, has also been used in some variants of diffusion maps \cite{nadler2005diffusion}. 

\section{Conclusion}\label{section_conclusion}

In this paper, we provided a tutorial and survey for the nonlinear dimensionality reduction methods which are based on the Laplacian of graph of data. We introduced the definition of Laplacian and then we covered the most important methods in this family, namely spectral clustering, Laplacian eigenmap, locality preserving projection, graph embedding, and diffusion map. 

\section*{Acknowledgement}

Some parts of this tutorial paper (particularly some parts of spectral clustering and Laplacian eigenmap) have been covered by Prof. Ali Ghodsi's tutorial videos, from University of Waterloo, on YouTube. 



\bibliography{References}
\bibliographystyle{icml2016}

\end{document}